\documentclass{article}   
\usepackage{ijcai15} 
\usepackage{hieroglf}
\newcommand{\hide}[1]{}    
\usepackage{rotate}    
\usepackage{graphicx} 
\usepackage{tabularx}
\usepackage[utf8]{inputenc} 
\usepackage{listings}  
\usepackage{amsthm} 
\usepackage{amsmath}
\usepackage{amssymb}
\usepackage{textcomp}   
\usepackage{multicol}
\usepackage{enumitem}
\usepackage{bussproofs}
\usepackage{color} 
\usepackage{times} 
\usepackage{tikz}    
\usetikzlibrary{lindenmayersystems}
\usepackage[framemethod=TikZ]{mdframed}
\usepackage{marvosym} 
\usepackage{listings}
\usepackage{caption}   
\usepackage{multirow} 
\usepackage{wrapfig}   
\usepackage[sans]{dsfont}
\usepackage{verbatim}
\usepackage[lofdepth,lotdepth]{subfig} 
     
\newtheorem{observation}{Observation}

\newtheorem{theorem}{Theorem}

\newcommand{\btriplet}{\textsf{BTriplet}}
\newcommand{\cond}{\textsf{Cond}}
\newcommand{\conflict}{\textsf{Conflict}}
\newcommand{\logcon}{\textsf{L}}

\newcommand{\bbase}{\textsf{BBase}} 
\newcommand{\cna}{\ensuremath{\hat{Cn}}} 
\newcommand{\xa}{\ensuremath{\hat{B}}} 
\newcommand{\diva}{\ensuremath{\hat{\div}}} 
\newcommand{\stara}{\ensuremath{\hat{*}}} 
\newcommand{\plusa}{\ensuremath{\hat{+}}} 
\newcommand{\qa}{\ensuremath{\hat{P}}}
   
\newtheorem{definition}{Definition}

\newcommand{\assoc}{\textsf{Assoc}}
 
\newcommand{\lit}{\textsf{Lit}} 
\newcommand{\exclude}{\textsf{Exc}}
\newcommand{\props}{\textsf{Props}}

\newcommand{\andMeta}{\ensuremath{\wedge^{\dagger}}}
\newcommand{\orMeta}{\ensuremath{\vee^{\dagger}}}

\newcommand{\visible}{\textsf{Visible}}

\title{How do you revise your belief set with 
     \%\$£;@*?} 





\begin{document} 
\maketitle
    \begin{abstract}          
       In the classic AGM belief revision theory, 
       beliefs are static and do not change 
       their own shape. 
       For instance, if 
       $p$ is accepted by a rational agent, it will remain $p$ 
       to the agent. But such rarely happens to us. 
       Often, 
       when we accept some information $p$, 
       what is actually accepted is not the whole 
       $p$, but only a view of it; 
       not necessarily because we select the portion but because 
       $p$ must be perceived. Only the perceived 
       $p$ is accepted; and 
       the perception  
       is subject to what we already believe (know). 
       What may, however,  happen 
       to the rest of $p$ that initially escaped our attention?  
       In this work we argue that the invisible 
       part is also accepted to the agent, if 
       only unconsciously. Hence 
       some parts of $p$ are accepted as visible, 
       while some other parts as latent, beliefs. 
       The division is not static. As 
       the set of beliefs changes, what were hidden 
       may become visible. 
       We present a perception-based belief theory that 
       incorporates latent beliefs. 
    \end{abstract} 
\section{Introduction}        
In the classic AGM belief revision theory \cite{Makinson85}, 
what a rational agent 
is committed to believe \cite{Levi91} forms 
his/her belief set X of formal sentences, which is usually assumed 
consistent and closed under logical consequences. 
To $X$ 
an input $P$, again some (hopefully consistent) sentence, is passed. 
If $P$ is not in conflict with beliefs in $X$, 
it is simply incorporated into $X$ without 
any prior operations. If not, however, 
the simplistic augmentation leads to inconsistency. 
In such situations, minimal changes 
are made to $X$ beforehand. Then the result 
is a consistent set. 
Either way, 
the new belief set is postulated to include $P$ (success postulate), to be consistent unless $P$ itself is inconsistent, 
and to be closed under logical consequences. 
This is roughly what it takes in the AGM belief 
revision. \\
\indent Some of the subsequent works have felt that there 
are parts in the AGM theory that may be over-simplified. 
The success postulate, for instance, 
has been questioned on the grounds that 
when one receives a new piece of 
information, it is hardly the case that he/she accepts 
it unconditionally  
\cite{Makinson97,HanssonFCF01,Ferme99,conf/kr/BoothFKP12,Ma11b}. 
Others have pointed out that the assumption 
of closure of $X$ by logical consequences should be 
relaxed, for even though two rational agents both commit to believe the same 
belief set, they may primarily believe 
some sub-set of it. They argue that such variance 
should have an effect on the outcome of a belief revision.  
Some others point out that the activity of belief revision should 
be understood more in sequence than as a one-off 
snapshot \cite{Boutilier93,Nayak294,Rott00,Darwiche97,Ma11}, countenancing 
that an epistemic state a rational agent is in should not be  
the same before and after a revision, and that 
the epistemic state should determine how the next revision 
is carried out. 
Still some others 
reflect that a belief can be not only revised but 
also updated \cite{katsuno92}. In case of belief revision, 
agents are discovering about the world they are put in. 
But the world itself may change, in correspondence to which 
beliefs of a rational agent's also alter.  
To sum up, after so much was achieved in \cite{Makinson85}, 
still so much is being investigated around the logical foundation
of belief revision. In this work of ours, we  
consider the relation between beliefs 
and agents' perception 
about them. As Ma et al. 
showed in \cite{Ma11}, in a perception-based model 
we can assume that what agents accept 
is not $P$ but what $P$ appears to him/her. 
We retain the promising feature in our approach. However, we also 
argue and consequently model that 
rational agents are, in reality, accepting what may potentially 
come to be $P$, even if 
only his/her perception of it was originally accepted.
Roughly, 
our belief acceptance model is as follows.  \nolinebreak
\begin{enumerate}[leftmargin=0.4cm]
    \item A belief base consists of 
        two types of beliefs. 
        One comprises all 
        that are reachable by the rational 
        agent through logical consequences. This type forms a belief base in the traditional
        AGM sense. The other comprises 
        all the other beliefs associated to 
        the visible beliefs, which are there, but 
        which are not presently visible to the rational 
        agent. These may be called latent beliefs.\footnote{The interpretation of a latent belief varies. 
In \cite{Altman06}, for instance, it is explained as a belief that 
can be introspectively discovered. In this work, 
such a belief is considered visible.}
    \item There comes an external information $\{P\}^\diamond$. It is 
        a collection of propositions, and 
        corresponds to an input $P$ 
        in the traditional AGM sense. Whatever the content 
        of the belief set is, $P$ out of 
        $\{P\}^\diamond$ is always accepted. 
        $P$ is therefore some essence  
         of the external information 
        $\{P\}^\diamond$. 
        All the other propositions are attributive to 
        $P$. How many of them the rational agent sees 
        depends 
        on what beliefs are visible to him/her in his/her 
        belief set.
    \item Upon acceptance of $\{P\}^\diamond$, therefore, 
        the visible part of $\{P\}^\diamond$ is accepted into 
        the agent's belief set. If any of them contradicts 
        his/her existing visible beliefs, then 
        a necessary belief contraction first takes place. 
        {\it However, 
        the invisible part of $\{P\}^\diamond$ is also 
        accepted as latent beliefs.} 
    \item The visible part of $\{P\}^\diamond$, meanwhile, 
        may stimulate existing latent beliefs 
        in the agent's belief set, making them visible. 
\end{enumerate}
By 1. a belief set in this model is larger than 
one in the traditional model, since it also 
contains latent beliefs. 
By 3. even though 
only the part of an external information visible 
to a rational agent is consciously accepted into 
his/her belief set, the remaining are also unconsciously 
accepted.  We now look into some details of 
our model. \\
\textbf{Justification of the essence of 
    external information} 
In our perception model, the visible part of the belief set 
of a rational agent in a way represents his/her mind. 
The mind determines how an external information appears to him/her.
Suppose now, however, 
that every visible belief in his/her belief set 
had come originally from the external world. Then 
there must have been a moment when he/she had no beliefs 
in his/her belief set. Thence arises a question. 
Given $\{P\}^\diamond$, what of $\{P\}^\diamond$, provided 
that it can be believed, does the empty mind accept, 
and how does he/she do it? Our supposition is that it must 
be accepted almost axiomatically. 
Perhaps an example helps here.  
Suppose a situation where 
we are instructed to have 
$\{P_x\}^\diamond = \ ${\tiny \pmglyph{\Hi-\Hw-\Hplus-\Hm-\HI-\HGi-{\HQiii:\HDxlvi}-\Hibp}}, as our belief. 
It does not appear at all like a belief. But anyway we 
are accepting it as such, for we are instructed to do so. 
Now, acceptance of {\tiny \pmglyph{\Hi-\Hw-\Hplus-\Hm-\HI-\HGi-{\HQiii:\HDxlvi}-\Hibp}}
in this manner  should not reveal so much 
about that which {\tiny \pmglyph{\Hi-\Hw-\Hplus-\Hm-\HI-\HGi-{\HQiii:\HDxlvi}-\Hibp}} 
is. In case the mind is empty, the revelation should be 
minimal. The minimal revelation 
is the essence of external information. $P_x$ 
for $\{P_x\}^\diamond$ is chosen to be it. It is meant
to imitate the Kant's {\it Form}  
\cite{Kant08} for beliefs. \\
\textbf{Justification of latent beliefs} 
If it must be the case that {\tiny \pmglyph{\Hi-\Hw-\Hplus-\Hm-\HI-\HGi-{\HQiii:\HDxlvi}-\Hibp}} be categorically accepted 
as a belief,
we, ourselves as a rational agent, go ahead and do so. Suppose that $P_x$ is `{\tiny \pmglyph{\Hi-\Hw-\Hplus-\Hm-\HI-\HGi-{\HQiii:\HDxlvi}-\Hibp}} is a belief.', and that 
it 
was consciously 
accepted into our belief set $X$. Now, suppose 
that we subsequently search on web (Further Acquisition 
of Beliefs) to identify that {\tiny \pmglyph{\Hi-\Hw-\Hplus-\Hm-\HI-\HGi-{\HQiii:\HDxlvi}-\Hibp}} means `The bird 
    eats.' in English translation. By this time 
our belief set is $X'$, and we have a 
revealed attributive belief to {\tiny \pmglyph{\Hi-\Hw-\Hplus-\Hm-\HI-\HGi-{\HQiii:\HDxlvi}-\Hibp}}, 
namely, `The bird eats.' and also `\ {\tiny \pmglyph{\Hi-\Hw-\Hplus-\Hm-\HI-\HGi-{\HQiii:\HDxlvi}-\Hibp}} means The bird eats.\ '.  
Evidently, they are a part of that which  
{\tiny \pmglyph{\Hi-\Hw-\Hplus-\Hm-\HI-\HGi-{\HQiii:\HDxlvi}-\Hibp}}
is, stimulated to come to the conscious mind by some external information. 
It would be strange to think that the possibility that they would become visible emerged 
all of a sudden at $X'$. The possibility was already there 
at $X$ as a potential. 
This indicates that by accepting an external 
information, which can be but a view of it in fact, 
a rational agent unconsciously also accepts 
the external belief as a larger construct, even though much may remain hidden to him/her for a prolonged period. This 
is the justification of the use of latent beliefs. \\
\textbf{Associations} 
Now, because latent beliefs cannot 
be consciously found, they also cannot 
be found through logical consequences 
of visible beliefs. 
We therefore need to have another kind of links 
that connect beliefs.  
For our purposes, they are called associations. Specifically, 
given three propositions (beliefs) $P_1, P_2$ and 
$P_3$, $P_1$ is said to be associated 
to $P_3$ through $P_2$, provided (1) that 
neither $P_2$ nor $P_3$ is a logical consequence 
of $P_1$, (2) that $P_1$ is a logical consequence 
of neither $P_2$ nor $P_3$, and (3)  that $P_3$ becomes visible in case 
$P_2$ is held visible in the agent's belief set. 
In case $P_2$ is not visible, $P_3$ is a latent belief. 
We shall denote the connection by 
the notation 
$P_1(P_2, P_3)$. $\{P_1\}^\diamond$ then comprises
$P_1$ and certain number of $P_1(P_2, P_3)$ attributive 
to $P_1$. The 
$P_1(P_2, P_3)$ is basically $P_3$    if an agent can see $P_3$.
Therefore it might be properer to call $P_3$ an attributive 
belief instead of $P_1(P_2, P_3)$. But for our 
purposes, we will simply call the latter an attributive belief (to $P_1$). 
An attributive belief $P(P_1, P_2)$, 
unlike `$P_1$ implies $P_2$', or `$P_2$ holds if 
a belief set is revised with $P_1$' \cite{Darwiche97}, 
is inevitably dependent on $P$, and is a part of $\{P\}^\diamond$.
\\
\textbf{Outlines, contributions, 
    related works and other remarks} 
To speak of our approach, we represent belief sets and 
external information both as a set of propositional formulas and
of triples of them, and present corresponding postulates  
to the AGM postulates. 
There are of course more to be desired. Nonetheless, 
it is a reasonable point to begin our modelling with. 
Adaptation to other representations can be 
sought after later. We formalise our ideas in Section 2. 
In Section 3, we present postulates for belief expansion, contraction, 
and revision, and observe an interesting phenomenon 
that even if  visible beliefs and an external information 
are both consistent, the result of belief revision
can lead to an inconsistent belief set. Informally,  
suppose  {\tiny 
        \pmglyph{\Hi-\Hw-\Hplus-\Hm-\HI-\HGi-{\HQiii:\HDxlvi}-\Hibp}} again. It is originally accepted 
into $X$ as $P_x$. 
 Suppose that $X$ has (rather irrationally) 
 `It is not the case that the bird eats.' Then 
 of course acceptance of {\tiny 
        \pmglyph{\Hi-\Hw-\Hplus-\Hm-\HI-\HGi-{\HQiii:\HDxlvi}-\Hibp}} would be contradictory to one of the beliefs 
in $X$. However, it is not contradictory at $X$, since 
to $X$, only a non-contradictory view of it is available.
However, at $X'$, some external information reveals 
a hitherto hidden part  of {\tiny 
        \pmglyph{\Hi-\Hw-\Hplus-\Hm-\HI-\HGi-{\HQiii:\HDxlvi}-\Hibp}}, namely, `The bird eats.' Hence 
at $X'$, it turns out that $X'$ is inconsistent; 
furthermore it turns out that it (should) have been all along 
inconsistent from the moment it 
was accepted. We may say that a potential inconsistency
was materialised at $X'$ 
through revelation of a latent belief attributive to an existing belief. Compared to the inconsistency in a typical 
belief revision theory that is caused by 
a new information contradicting old beliefs, 
the inconsistency that we have just mentioned may be already 
in the old beliefs. \\
\indent As far as can be 
gathered from existing works in this research field, the 
idea that we have gone through, i.e. the incorporation of 
the kind of latent beliefs and of their revelation 
through acquisition of beliefs, 
is new, and in a way models our inspiration which 
is often 
triggered by seemingly unrelated information. 
Typically, agents in belief revision theory 
accept an external information as that which it really is. 
But that cannot be 
generally achievable in our model. Ma et al. \cite{Ma11}, on the other hand, 
explicitly looks at a perception model, albeit 
belief revision not on propositions but on possible-world-based epistemic 
states. However, their model focuses on 
agents' interpretation of uncertain inputs, not on 
the latent beliefs: in their work, 
belief revision is subject to the perceived inputs, 
but the parts that are imperceptible at the time 
of belief revision have no effect on the result 
of a belief revision. 
Also,  as they note, 
external information being represented as epistemic states, 
conjunction of 
two epistemic states are undefinable in their method. It is 
also the case, at least for now, that 
they have dealt only with belief revision, while 
expansion and contraction are left for a future 
work.  We present all the three, and show 
the relations between them 
just as in the AGM theory.  
It could be that our approach 
may turn out to be useful also for addressing 
some of the technical challenges in their work. 
Several 
works \cite{Katsuno91,Benferhat05,Ma12b} consider  
cases where an agent may have several views 
about his/her beliefs. 
But these are concerned with agents' certainty about 
their beliefs, not about latent beliefs. 
We will conclude our work in Section 4 with an example. 
\section{Formalisation}        
Readers are referred to Section 1 for the intuition of the key notations. 
We assume a set of possibly uncountably many 
atomic propositions. We denote the set by 
$\mathcal{P}$, and refer to each element by 
the letter $p$ with or without a subscript. 
More general propositions are 
constructed 
from $\mathcal{P}$ and the logical 
connectives of 
propositional classical logic:  
$\{\top_0, \bot_0, \neg_1, \wedge_2, \vee_2\}$.  
    The subscripts denote the arity.  
    We do not specifically use the classical implication
    $\supset_2$, 
    but it is derivable in the usual manner 
    from $\vee$ and $\neg$, as $p_0 \supset p_1 \equiv 
    \neg p_0 \vee p_1$. 
    We denote the set of literals by \textsf{Lit}. We denote the set of all the propositions 
    by $\props$, and refer to each element 
    by $P$ with or without a subscript. 
    These model beliefs. In the rest, 
    we use the two terms: propositions and beliefs, almost 
    interchangeably. We only prefer to use `proposition' when 
    external information is involved, and `belief' when 
    the proposition is in a belief set. 
    To model the concept of associations 
    among propositions/beliefs, we also define what we call attributive 
    propositions/beliefs.  Let us assume that, given 
    any $O \subseteq 2^{\props}$, $\logcon(O)$ is 
    the set of all the propositions that 
    are the logical consequences of any (pairs of) elements in $O$. 
    We say that a set of propositions/beliefs $U$ is 
    consistent iff for any 
    $P \in \props$, if $P \in U$, then $\neg P \not\in U$, 
    and if $\neg P \in U$, then $P \not\in U$. 
    We assume that, 
    given any pair of some sets $(U_1, U_2)$, 
    $\pi_0((U_1, U_2)) = U_1$ and 
    $\pi_1((U_1, U_2)) = U_2$. 
    We further assume that the union of two pairs of 
    sets: $(U_1, U_2) \cup (U_1', U_2')$ 
    is $(U_1 \cup U_1', U_2 \cup U_2')$. 
    \begin{definition}[Associations and attributive beliefs] 
        We define an association tuple 
        to be a tuple $(\mathcal{I}, X,  \assoc)$ where 
        $\mathcal{I}$ is a mapping 
        from $\lit$ to  $2^{\props \times 
        \props}$; 
        $X$ is an element of $2^{\props}$; and 
        $\assoc$ is a mapping from 
        $\props$ to $2^{\props \times \props}$. 
        Let $\exclude$ be a mapping 
        from $\props$ to $2^{\props}$ such that 
        $\exclude(P) = \logcon(\{P\}) \cup 
        \{P_1 \in \props \ | \ P \in \logcon(\{P_1\})\}$. 
        Then 
        $\mathcal{I}$ is defined to satisfy that, 
       for any $P \in \lit$, if either  
       $P_1 \in \exclude(P)$ or $P_2 \in 
       \exclude(P)$, then $(P_1, P_2) \not\in 
       \mathcal{I}(P)$. 
       $\assoc$ is defined to satisfy (1) that if $P$ is a tautology, 
       then $\assoc(P) = (\emptyset, \emptyset)$; (2) that 
       if $P$ is inconsistent, i.e. $\neg P$ is a
       tautology, then $\assoc(P) = (\props, \props)$; and (3) 
       that, if neither,  
       {\small 
           \begin{enumerate}[leftmargin=0.4cm] 
               \item $\assoc(P) = \mathcal{I}(P)$ if $P \in \textsf{Lit}$.   
               {\normalsize
               {\it Explanation}: The association 
                   links are obtained recursively, and 
                   so it suffices that $\mathcal{I}$ 
                   be defined only for the elements of \textsf{Lit}.  
               }
           \item $\assoc(P_1 \wedge P_2) = 
               (\assoc(P_1) \cup \assoc(P_2)) 
               \downarrow \exclude(P_1 \wedge P_2)$ 
               where $(U_1, U_2) \downarrow U_3
               = (U_1', U_2')$ such that 
               $U_{1,2}' = U_{1,2} \backslash U_3$.   
               {\normalsize 
               {\it Explanation}:  $P_1$ 
               and $P_2$ each has certain set of 
               attributive beliefs/propositions, 
               say $U_{P_1}$ and respectively $U_{P_2}$. 
               Then it is more natural to think 
               that the elements of $U_{P_1} \cup U_{P_2}$ 
               are recognised attributive to 
               $P_1 \wedge P_2$. The only exceptions are 
               those beliefs/propositions which are 
               connected to $P_1 \wedge P_2$ 
               by logical consequences. These are excluded.  
           }
           \item $\assoc(P_1 \vee P_2)
               = \assoc(P_1 \wedge P_2)$ 
               if $P_1, P_2 \in X$. 
               {\normalsize 
               {\it Explanation}: In this paper, 
               that a belief/proposition  
               is in some $X$ means, informally, that 
               it is a [held belief]/[true proposition]
               in $X$, as to be evidenced in 
               the next section. Now, if both $P_1$ and $P_2$ 
               are held/true in $X$, then
               there is only a locutionary 
               difference between 
               $P_1 \vee P_2$ and $P_1 \wedge P_2$. 
           }
           \item $\assoc(P_1 \vee P_2)\! =\! 
               \assoc(P_i)$ if 
               $\neg P_j \! \in\! X$ 
               for $i, j\! \in\! \{1,2\}, i\! \not=\! j$.   
               {\normalsize 
                   {\it Explanation}: If 
                   $\neg P_j$ is in $X$,  
                   then if $X$ is consistent at all, 
                   $P_1 \vee P_2$ is a [held belief]/[true
                   proposition] in $X$ 
                   just because $P_i$ is a [held belief]/[true 
                   proposition] in $X$.
                   \hide{ $P_2$ may or may not be 
                   in $X$. 
                   If it is in $X$, then 
                                      And so $\assoc(P_1 \vee P_2) = \assoc(P_2)$. 
                   If $\neg P_2$ is in $X$, 
                   then $X$ cannot be a consistent 
                   set if it contains $P_1 \vee P_2$, because 
                   it also contains $\neg (P_1 \vee P_2)$. 
                   So we do not - at least in this paper 
                   where the inconsistency tolerance is 
                   not the focus - care what
                   beliefs are attributive to $P_1 \vee P_2$. 
                   If neither $P_2$ nor $\neg P_2$ is in $X$, 
                   then either of the two possibilities 
                   holds. Whichever is the case, it makes sense to 
                   have $\assoc(P_1 \vee P_2) = \assoc(P_2)$ here. 
                   Similarly when $\neg P_2$ is in $X$.  
               }
               }  
           \item $\assoc(P_1 \vee P_2) 
               = \assoc(P_i) \downarrow
               \exclude(P_1 \wedge P_2)$ 
               if $P_i \in X$ and 
               $P_j, \neg P_j \not\in X$ 
               for $i,j\in\{1,2\}$, $i\not=j$. 
               Explanation: If $P_i$ is in $X$, 
               but not the other or negation of 
               the other, then the rational agent 
               holding $X$ may or may not believe
               $P_j$. But any pair $(P_x, P_y)$ 
               which is in the set on the right 
               hand side of the equation 
               is included in either of the two 
               cases, and is safe to assume. 
           \item $\assoc(P_1 \vee P_2)$ 
               consists of all the pairs $(P_x, P_y)$ satisfying
                the following, otherwise: there exists 
                $(P_a, P_{A})$ in  
                $\assoc(P_1)$ and 
                there exists $(P_b, P_{B})$ in 
                $\assoc(P_2)$ such that
                (1) $P_x = P_a$; (2) $\logcon(P_a) = \logcon(P_b)$;
                (3) either $P_{B} \in \logcon(P_{A})$
                or $P_{A} \in \logcon(P_{B})$; 
                (4) if $P_{B} \in \logcon(P_{A})$, 
                then $P_y = P_{B}$, 
                else if $P_{A} \in \logcon(P_{B})$, 
                then $P_y = P_{A}$; and (5) 
                $P_x, P_y \not\in \exclude(P_1 \wedge P_2)$.  
                {\normalsize Explanation:  
                For the other cases, if $X$ 
                is consistent at all, then 
                it is not clear if 
                $\assoc(P_1 \vee P_2)$ is 
                $\assoc(P_1 \wedge P_2)$ or 
                $\assoc(P_i)$, $i \in \{1,2\}$. 
                But if $P_y$ becomes visible 
                under the same condition ($P_x$) in each of the three 
                cases, then we know that $(P_x, P_y)$ is a pair that is safe to assume.  
            } 
           \item $\assoc(\neg (P_1 \wedge P_2)) 
            = \assoc(\neg P_1 \vee \neg P_2)$. 
        \item $\assoc(\neg (P_1 \vee P_2)) = \assoc(\neg P_1 
            \wedge \neg P_2)$. 
       \end{enumerate}     
          } 
Now, assume 
       that $P$ is a proposition/belief which is neither 
       tautological nor inconsistent.  
       Let us call each $P(P_1, P_2)$ for 
       some $P, P_1, P_2 \in \props$ 
       a belief triplet, and let us denote 
       the set of belief triplets by $\btriplet$. 
       Then we define the set 
       {\small $\{P(P_1, P_2) \in \btriplet \ | \ [(P_1, P_2) \in 
           \assoc(P) ] \andMeta \assoc(P) \not= 
           (\props, \props)]\}$}\footnote{In lengthy formal 
    expressions, we use meta-connectives 
    $\andMeta, \orMeta, \rightarrow^{\dagger}, \forall, \exists$ in place 
for conjunction, disjunction and material implication, 
universal quantification and existential quantification, each 
following the semantics in classical logic.  }
 to be 
   the set of propositions/beliefs attributive 
   to $P$. We denote the set by $\cond(P)$. 
   We denote $\bigcup_{P \in \props} \cond(P)$ 
   simply by $\cond$. 

   \end{definition}     
\begin{figure*}[!t]   
\begin{center}  
        \scalebox{0.95}{ 
            \begin{tabularx}{\textwidth}{XX}     
                \textbf{Belief expansion} \\
                $1. \cna(\xa) \plusa \qa = \cna(\cna(\xa) 
                \cup \qa)$.
                \\
                \textbf{Belief contraction}   
                & 
\textbf{Belief revision} \\
$1. \cna(\xa) \diva \qa = \cna(\cna(\xa) \diva \qa)$ 
 (Closure). 
                & 
                $1. \cna(\xa) \stara \qa = \cna(\cna(\xa) \stara
                \qa)$ (Closure). \\
                $2. \qa \not\in \cna(\emptyset)  
                \rightarrow^{\dagger}  
                \qa \not\in \cna(\cna(\xa) \diva \qa)$ 
                (Success). 
                & 
                $2. \qa \in \cna(\xa) \stara \qa$ 
                (Success). \\
                $3. \cna(\xa) \diva \qa \subseteq 
                \cna(\xa)$ (Inclusion). 
                & 
                $3. \cna(\xa) \stara \qa 
                \subseteq \cna(\xa) \plusa \qa$ 
                (Inclusion). 
                \\ 
                $4. \qa \not\in 
                \cna(\xa) \rightarrow^{\dagger} 
                \cna(\xa) \diva \qa = \cna(\xa)$ 
                (Vacuity). 
                & 
                $4. [\neg \qa \not\in \cna(\xa)] \rightarrow^{\dagger}
                    [\cna(\xa) \stara \qa = \cna(\xa) \plusa 
                    \qa]$ (Vacuity). \\
                $5. [\qa_1 \leftrightarrow  
                \qa_2 \in \cna(\emptyset)] 
                \rightarrow^{\dagger} [\cna(\xa) \diva \qa_1 = \cna(\xa) \diva \qa_2]$ 
                (Extensionality). 
                & 
                $5. [\qa_1 \leftrightarrow \qa_2 \in 
                \cna(\emptyset)] \rightarrow^{\dagger} 
                [\cna(\xa) \stara \qa_1 = \cna(\xa) 
                \stara \qa_2]$ (Extensionality). \\ 
                $6. \cna(\xa) \subseteq 
                (\cna(\xa) \diva \qa) \plusa \qa$ (Recovery). 
                & 
                $6. \cna(\xa) \stara \qa$ is consistent 
                if $\qa$ is consistent (Consistency). \\
                $7^*. [\qa_1 \not\in  
                \cna(\xa) \diva (\qa_1 \wedge \qa_2)] 
                \rightarrow^{\dagger} 
                [\cna(\xa) \diva (\qa_1 \wedge \qa_2) 
                \subseteq \cna(\xa) \diva \qa_1]$ 
                (Conjunctive inclusion). 
                & 
                $7^*. \cna(\xa) \stara (\qa_1 \wedge \qa_2) 
                \subseteq (\cna(\xa) \stara \qa_1) 
                \plusa \qa_2$ (Super-expansion). 
                \\
                $8^*.(\cna(\xa) \diva \qa_1) 
                \cap (\cna(\xa) \diva \qa_2) 
                \subseteq \cna(\xa) \diva (\qa_1 \wedge 
                \qa_2)$ (Conjunctive overlap).  
                & 
                $8^*.[\qa_1 \not\in \cna(\xa) \stara 
                \qa_2] \rightarrow^{\dagger} 
                [(\cna(\xa) \stara \qa_2) \plusa \qa_1 
                \subseteq \cna(\xa) \stara (\qa_1 \wedge 
                \qa_2)]$ (Sub-expansion). 
    \end{tabularx}   
}
\end{center}   

\caption{The AGM postulates for belief expansion, contraction 
    and revision. The last two postulates 
    for belief contraction and for belief revision 
    are supplementary postulates that  
    regulate  belief retention. $\qa_1 \leftrightarrow 
    \qa_2$ is an abbreviation 
    of $(\neg \qa_1 \vee \qa_2) \wedge
    (\neg \qa_2 \vee \qa_1)$. 
}  
\label{postulates} 
\end{figure*}

    \hide{ The definition in (the 3rd, the 4th and) the 5th 
    item says that $\assoc$ may be non-deterministic 
    for disjunction, depending on the content of $X$.} 
\hide{ 
    We call any triple of propositions/beliefs 
       $P_1(P_2, P_3)$ for some $P_1, P_2, P_3 \in \props$ 
       a belief triplet. We denote the set of 
       belief triplets by $\btriplet$.  
   \begin{definition}[Attributive propositions/beliefs]  
           Assume some association tuple 
           $T := (\mathcal{I}, X, \assoc)$. Let $P$ be a proposition which is neither 
       tautological nor inconsistent. 
       We define the set {\small $\{P(P_1, P_2) \in \btriplet \ | \ [(P_1, P_2) \in 
           \assoc(P)] \andMeta [\assoc(P) \not= 
           (\props,\props)]\}$}\footnote{In lengthy formal 
    expressions, we use meta-connectives 
    $\andMeta, \orMeta, \rightarrow^{\dagger}, \forall, \exists$ in place 
for conjunction, disjunction and material implication, 
universal quantification and existential quantification, each 
following the semantics in classical logic.  }
      to be the set of propositions/beliefs attributive 
       to $P$ under $T$. We denote the set by 
       $\cond_T(P)$. We denote 
       $\bigcup_{P \in \props}\cond_T(P)$ simply 
       by $\cond_T$.  
   \end{definition} 
}
    \noindent Observe that there is no belief attributive 
    to a tautological or an inconsistent belief. 
    \hide{ Observe also that 
    there may be more than one association tuples 
    for a given $\mathcal{I}$ and a given $X$, depending 
    whether disjunction occurs in subformula(s) of $P$.}\\
    \indent To characterise rational agents, 
    let us denote $2^{\props} \times 2^{\btriplet}$ 
    by $\bbase$, and call each element of 
    $\bbase$ a belief base.  We denote any element 
    of $\bbase$ by $B$ with or without a subscript. 
    \hide{        We say that 
    a belief base $B := (U, V)$ is consistent 
    under some $\mathcal{I}$ and 
    some $\assoc$ 
    iff (1) $U$ is consistent and 
    (2) $V = \cond_{(\mathcal{I}, U, \assoc)}$.   
}
              Instead of working directly on a belief base, however, 
    just as in the AGM theory we represent a rational agent 
    as some belief set.  
    \hide{ 
Let us denote 
{\small $\bigcup_{P \in \props}\{\cond_T(P) \ | \  
        T = (\mathcal{I}, \logcon(\emptyset), \assoc)  
\}$}
simply by $\cond$. It follows straightforwardly 
that {\small $\bigcup_{P \in \props}\{\cond_{T'}(P) \ | \ T' = 
        (\mathcal{I}, \logcon(X), \assoc) \text{ for 
            some consistent } X\} \subseteq \cond$}.  
We define a belief base 
to be 
a pair in $2^\props \times 2^{\cond}$. Here, 
we assume that an element of $\cond$ 
is $\bigcup_{P \in \props} \{M \in \cond_{(\mathcal{I}, 
        \logcon(\emptyset), \assoc)}(P)\}$. } 
\begin{definition}[Belief sets] 
    \label{belief_set} 
    Let $Cn$ be a closure operator on 
    $\bbase$  
    such that  
    $Cn(B)$ is 
    the least fixpoint  
    of $Cn^k(B)$ ($k \ge 0$), defined by:  
    \begin{itemize}[leftmargin=0.3cm] 
        \item {\small $\pi_0(Cn^0(B)) = 
                \logcon(\pi_0(B))$}.  
            \hide{\item {\small $T^k = 
                (\mathcal{I}, \pi_0(Cn^k(B)), \assoc_k)$} 
            if {\small $\pi_0(Cn^k(B))$} is consistent. 
        }
\item {\small $\pi_1(Cn^k(B)) = 
                \bigcup_{P \in 
                    \pi_0(Cn^k(B))} \cond(P)$} 
                \item {\small $\pi_0(Cn^{k+1}(B)) = 
                \logcon(\pi_0(Cn^k(B)) \cup 
                \mathsf{A^{k}})$}.\\
                \end{itemize}   
where  
            {\small $\mathsf{A^k} = 
            \{P_2 \ | \ [P(P_1, P_2) \in \pi_1(Cn^k(B))]
                \andMeta [P_1 \in \pi_0(Cn^k(B))]\}$}.
        $Cn(B)$ is assumed compact. 
 We say that a belief base $B$ is closed iff 
 $B = Cn(B)$. 
  We call any closed belief base a belief set. 
\end{definition}    
\noindent  
{\it Informal explanation}: $B$ and $Cn^i(B)$ have 
two components, the first of which contains 
propositions, and the second of which 
contains belief triplets. 
The starting point for the fixpoint iteration is 
$Cn^0(B)$. The first 
component of $Cn^0(B)$, i.e. $\pi_0(Cn^0(B))$, 
contains all the propositions as result from 
taking the $\logcon$ closure on $\pi_0(B)$.  
The second component of $Cn^0(B)$, i.e. 
$\pi_1(Cn^0(B))$, contains all the 
associated 
attributive propositions 
to each proposition in $\pi_0(Cn^0(B))$. 
 In the next round of $Cn$ application
to $Cn^0(B)$ to obtain $Cn^1(B)$, we first obtain 
a set of propositions, $\mathsf{A^0}$. 
If $P \in \mathsf{A^0}$, then (1) $P$ occurs in 
$Cn^0(B)$ as 
$P_x(P_y, P)$ for some $P_x$ and some $P_y$, 
and (2) the $P_y$ occurs in $Cn^0(B)$. 
The $\mathsf{A^0}$ is added to the first component of $Cn^0(B)$, 
which is then closed under logical consequences. 
This set is the first component of $Cn^1(B)$. 
The second component of $Cn^1(B)$ then contains  
all the attributive beliefs associated to them. This process 
is repeated, eventually reaching the fixpoint, 
$Cn(B)$.  
\hide{ To speak of the $M \in \cond_T(P)$ and the requirement 
that $\pi_1(Cn^i(B)) \subseteq M$ for $i \ge 0$,
the intentions behind stating the condition are as follows: (1) 
firstly, if $\cond_T(P)$ must be constructed in an incremental 
manner 
along this fixpoint iteration, we would not be able to tell, 
in advance of the iteration, 
what beliefs are attributive to each proposition. Therefore
it must be given that 
$T := (\mathcal{I}, X, \assoc)$ is known in advance. Note that 
this 
assumption does not lead to a paradox. Similar assumptions 
are taken for the knowability of truth within classical logic. (2) Then,
knowing $\cond_T(P)$, the fixpoint iteration 
should assume, for each proposition, those attributive propositions 
corresponding to it in some $M \in \cond_T(P)$. (3) Now, 
clearly, $M$ should not be swapped to another $M' \in \cond_T(P)$ 
during the fixpoint iteration. $\pi_1(Cn^i(B)) \subseteq M$ 
for $i \ge 0$ enforces that the requirement be respected.} $\Box$\\
\begin{observation}[Adequacy] 
    For any belief set $Cn(B)$ and for any $P \in \props$, 
 if $P \not\in Cn(B)$, then $Cn(B)$ does not 
 contain any beliefs attributive to $P$.\footnote{Adequacy 
     may better be a postulate on its own 
     than 
     be integrated into $Cn$. 
     We choose this way here only because 
     it keeps the number of postulates 
     on par with that in the AGM theory.}  
\end{observation}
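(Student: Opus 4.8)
The plan is to exploit the one structural feature of the construction that makes the statement nearly immediate: every attributive belief is indexed by the proposition it is attributive to, and the second component of a belief set is assembled strictly by gathering these indexed families over the visible propositions. For tautological or inconsistent $P$ the claim is vacuous, since no belief is attributive to such $P$ (as already observed), so I would fix a $P$ that is neither and prove the contrapositive. First I would record the identity that survives to the fixpoint: because $Cn(B)$ is a fixpoint of the operator whose $\pi_1$-clause reads $\pi_1(Cn^k(B)) = \bigcup_{Q \in \pi_0(Cn^k(B))} \cond(Q)$, the limit inherits $\pi_1(Cn(B)) = \bigcup_{Q \in \pi_0(Cn(B))} \cond(Q)$. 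This is the only property of the fixpoint I will use.

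The key step is to observe that the families $\cond(Q)$ are pairwise disjoint across distinct heads $Q$. By Definition~1, $\cond(Q) \subseteq \btriplet$ consists solely of belief triplets of the form $Q(\cdot,\cdot)$, i.e. triplets whose first coordinate is exactly $Q$. Since a belief triplet $P(P_1,P_2)$ is determined by its three coordinates, membership $P(P_1,P_2) \in \cond(Q)$ forces $P = Q$; equivalently $\cond(P) \cap \cond(Q) = \emptyset$ whenever $P \neq Q$. Hence the union above is genuinely a disjoint union indexed by $\pi_0(Cn(B))$.

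Combining the two facts, I would argue by contraposition. Suppose $Cn(B)$ contained a belief attributive to $P$, i.e. some $P(P_1,P_2) \in \cond(P) \cap \pi_1(Cn(B))$. By the fixpoint identity there is a $Q \in \pi_0(Cn(B))$ with $P(P_1,P_2) \in \cond(Q)$, and by head-disjointness $Q = P$. Hence $P \in \pi_0(Cn(B))$, that is $P \in Cn(B)$, contradicting $P \notin Cn(B)$. This is precisely the contrapositive of the stated implication.

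The only step demanding care — and the one I would flag as the main obstacle — is transferring the $\pi_1$-clause from the finite stages $Cn^k(B)$ to the limit $Cn(B)$. Here I would note that $\pi_0(Cn^k(B))$ is increasing in $k$ (since $\logcon$ is extensive and each stage only adds to its input), so $\pi_0(Cn(B)) = \bigcup_k \pi_0(Cn^k(B))$; because $\pi_1$ is obtained from $\pi_0$ by a pointwise application of $\cond$ followed by a union, this union commutes with the chain, giving $\pi_1(Cn(B)) = \bigcup_k \bigcup_{Q \in \pi_0(Cn^k(B))}\cond(Q) = \bigcup_{Q \in \pi_0(Cn(B))}\cond(Q)$, and the compactness assumption on $Cn(B)$ licenses treating the limit as an honest fixpoint. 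Everything after that is bookkeeping on triplet coordinates, with no logical-consequence reasoning required.
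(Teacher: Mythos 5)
Your proof is correct and follows exactly the route the paper intends: the observation is stated without an explicit proof because, as the footnote indicates, adequacy is ``integrated into $Cn$,'' i.e.\ it is immediate from the clause $\pi_1(Cn^k(B)) = \bigcup_{Q \in \pi_0(Cn^k(B))} \cond(Q)$ surviving to the fixpoint together with the fact that every element of $\cond(Q)$ is a triplet with head $Q$. Your unfolding of that construction (including the vacuous tautological/inconsistent cases and the transfer of the $\pi_1$-identity to the limit) is precisely the argument the paper leaves implicit.
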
  
A belief set $Cn(B)$ is the representation of 
what a rational agent holding it is committed to believe.  
\hide{
Meanwhile, just for reference, what a putative irrational agent 
who sees everything, even beyond what he/she 
can perceive, 
would be committed to believe, 
which we denote  
by $Cn^{\bullet}(B)$, is 
defined in a similar way to $Cn(B)$, 
but
instead of $\mathsf{A}^k$, 
$\mathsf{A}^{\bullet k}= \{P_2 \ | \ P(P_1, P_2) 
    \in \pi_1(Cn^{\bullet k}(B))\}$ is used. 
The two sets do not generally coincide. Even 
when the same $T^k$s, $k \ge 0$, 
are shared, we generally have 
$Cn(B) \subseteq Cn^{\bullet}(B)$. \\ 
}
For the external information,  
we assume that every piece of such information 
is a set of propositions with one primary proposition 
$P$ and other (zero 
or more) propositions attributive to $P$, i.e.
$\{P\}^\diamond = (\{P\}, \cond(P))$. 
The receiving agent may or may not notice 
of any of the attributive propositions  
in $\{P\}^\diamond$ at the time he/she accepts $\{P\}^\diamond$. 
That depends on $\pi_0(Cn(B))$. 
Let 
us denote by $\visible_B(\{P\}^\diamond)$ 
the part of $\{P\}^\diamond$ visible to $Cn(B)$. 
Specifically, 
$\visible_B(\{P\}^\diamond)$ is defined to be 
{\small $P \cup \{P_2 \ | [P(P_1, P_2) \in 
        \pi_1(\{P\}^\diamond)] \andMeta 
        [P_1 \in \pi_0(Cn(B))]\}$}.
\section{Postulates and Representations}   
We first of all state one postulate about the relation 
between an association tuple and a belief set. 
\begin{enumerate}[leftmargin=0.5cm]
    \item For any belief set $Cn(B)$, 
        $(\mathcal{I}, \pi_0(Cn(B)), \assoc)$ is 
        the association tuple for it. 
      \end{enumerate}  
We have two principles for expansion. 
The AGM postulates for expansion, contraction and 
revision are listed in Figure \ref{postulates} for 
easy comparisons. 
$\cna$ is a closure operator by logical consequences;
$\xa$ is a set of propositions; 
$\qa$ is a proposition as an external information, i.e. all in the AGM sense \cite{Makinson85}. 
\begin{enumerate}[leftmargin=0.5cm] 
    \item $Cn(B) + \{P\}^{\diamond} = Cn(Cn(B) \cup 
        \{P\}^\diamond)$ (Augmentation). 
\item If the  association 
            tuple for $Cn(B)$ is 
            {\small $(\mathcal{I}, \pi_0(Cn(B)), \assoc)$}, 
            then that to $Cn(B) + \{P\}^\diamond$ 
            is $(\mathcal{I}, \pi_0(Cn(B) + \{P\}^\diamond), 
            \assoc)$ (Association update). 
\end{enumerate}
The postulates for contraction are as follows.  
\begin{enumerate}[leftmargin=0.5cm]
    \item $Cn(B) \div \{P\}^\diamond = Cn(Cn(B) 
        \div \{P\}^\diamond)$  
        (Closure).  
    \item {\small $\forall 
        P_1 \in \visible_{B}(\{P\}^\diamond).P_1 
        \not\in \pi_0(Cn(\emptyset))
        \rightarrow^{\dagger} 
        P_1 \not\in \pi_0(Cn(B) \div \{P\}^\diamond 
        )$} (Success).    
    {\it Explanation}: If a proposition 
            that is visible to $Cn(B)$ 
            is not a tautology, then 
            it is not in 
            the contracted set. 
                \item $Cn(B) \div \{P\}^\diamond \subseteq 
            Cn(B)$ (Inclusion).  
            {\it Explanation}: The belief set 
            to result 
            through belief contraction is a sub-set 
            of the initial belief set. 
        \item {\footnotesize $(\forall P_a \in \visible_B(\{P\}^\diamond). 
                P_a \not\in \pi_0(Cn(B)) \orMeta 
                P_a \in \pi_0(Cn(\emptyset))) \rightarrow^{\dagger}
            Cn(B) \div \{P\}^\diamond = Cn(B)$} (Vacuity). 
            {\it Explanation}: If 
            all the visible propositions of 
            an external information are either 
            a tautology or a proposition 
            not 
            in $Cn(B)$, then the belief contraction
            does not modify 
            $Cn(B)$. 
        \item {\small $[\logcon(\visible_{B}(\{P_1\}^\diamond)) 
            = 
            \logcon(\visible_{B}(\{P_2\}^\diamond))]$}
            {\small $\rightarrow^{\dagger}
            [Cn(B) \div \{P_1\}^\diamond = 
            Cn(B) \div \{P_2\}^\diamond]$} 
            (Extensionality).  
            {\it Explanation}: If 
            $\visible_{B}(\{P_1\}^\diamond)$ and 
            $\visible_{B}(\{P_2\}^\diamond)$ 
            are indistinguishable in content, 
            then contracting 
            $Cn(B)$ by $\{P_1\}^\diamond$ or by 
            $\{P_2\}^\diamond$ is the same. 
                    \hide{
\item {\small $[P_0(P_1, P_2) \in \pi_1(Cn(B) \div \{P\}^\diamond)] \rightarrow^{\dagger}$}\\ 
                {\small $[P_0 \in \pi_0(Cn(B) \div \{P\}^\diamond)]$}
            (Adequacy). 
            {\it Explanation}: Belief contraction 
            ensures that the resultant belief 
            set be adequate. 
        }
            \hide{ \item {\small $[P_x \in \pi_0(Cn(B) \div \{P\}^\diamond)]
                \rightarrow^{\dagger} 
                [\forall P_x(P_y, P_z) 
                \in \pi_1(Cn(B)) .P_x(P_y, P_z) \in 
                \pi_1(Cn(B)\div \{P\}^\diamond)]$} 
            (Conservation). 
            {\it Explanation}: For any 
            belief in the contracted 
            belief set, any belief attributive 
            to the belief found in the original 
            belief set is also found in the contracted  
            belief set.   
        } 
\item {\small $Cn(B)
        \subseteq (Cn(B) \div \{P\}^\diamond)
                + (\visible_B(\{P\}^\diamond), \emptyset)$} 
                (Recovery).  
            {\it Explanation}: 
            If the contracted belief set is 
            expanded with the beliefs that have been removed, 
            then the belief set contains all the 
             beliefs in the original belief set.
        \item If the association 
            tuple for $Cn(B)$ is 
            $(\mathcal{I}, \pi_0(Cn(B)), \assoc)$, 
            then that for $Cn(B) \div \{P\}^\diamond$ 
            is $(\mathcal{I}, \pi_0(Cn(B) \div \{P\}^\diamond), 
            \assoc)$ (Association update).  
\end{enumerate}        
\noindent There is no gratuitous recovery, however. 
Compare it to Recovery in Figure \ref{postulates}.
\begin{observation}
    There is no guarantee that 
    we have  $Cn(B) \subseteq (Cn(B) \div \{P\}^{\diamond})
        + \{P\}^\diamond$. 
\end{observation}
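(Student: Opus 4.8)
The plan is to refute the universal inclusion by exhibiting a single counterexample: one belief base $B$, one external information $\{P\}^\diamond$, and one contraction operator $\div$ that satisfies all of the contraction postulates (crucially Recovery), yet for which some belief of $Cn(B)$ is absent from $(Cn(B)\div\{P\}^\diamond)+\{P\}^\diamond$. The leverage is the structural gap between the two ways of adding the removed material back. Recovery re-introduces $\visible_B(\{P\}^\diamond)$ as \emph{plain propositions} in the first component, so they become visible unconditionally; full re-expansion with $\{P\}^\diamond=(\{P\},\cond(P))$ re-introduces the attributive part only as \emph{belief triplets}, whose second coordinate $P_2$ is revealed solely when its trigger $P_1$ is present in the (now smaller) contracted set. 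If a trigger that was available in $Cn(B)$ cannot be re-established from $P$ together with the survivors, the corresponding attributive belief is irrecoverably lost.

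First I would engineer a self-supporting, circular configuration so that the trigger is genuinely irrecoverable under triplet-only re-expansion. Take three distinct atoms $p,a,m$, set $P:=p$, and put $(a,m)\in\mathcal{I}(p)$, which is admissible since $a,m\notin\exclude(p)$ (the atoms are logically independent of $p$); this yields the triplet $p(a,m)\in\cond(p)$. Seed the base with $\pi_0(B)=\{p,\ a,\ (p\wedge m)\supset a\}$. Computing $Cn(B)$ by the fixpoint of Definition~\ref{belief_set}: since $a$ is present, the trigger of $p(a,m)$ fires and $m$ becomes visible, so $p,a,m\in\pi_0(Cn(B))$. Consequently $\visible_B(\{p\}^\diamond)=\{p,m\}$.

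Next I would exhibit one valid contraction and compute the failing re-expansion. Let $Cn(B)\div\{p\}^\diamond$ be the closed belief base whose first component is $\logcon(\{(p\wedge m)\supset a\})$, i.e.\ with $p$ and $a$ deleted from the seed. I would then verify Success ($p,m$ are absent), Inclusion, and the decisive point, Recovery: expanding with $(\{p,m\},\emptyset)$ restores $p,m$ directly, whereupon $(p\wedge m)\supset a$ yields $a$ and the fixpoint regenerates all of $\pi_0(Cn(B))$, while Vacuity and Extensionality hold vacuously. For the full re-expansion $(Cn(B)\div\{p\}^\diamond)+\{p\}^\diamond$, adding $p$ as a proposition and $\cond(p)$ as triplets gives first component $\logcon(\{p,(p\wedge m)\supset a\})$ at the base step; since $a$ is not in this set, the trigger of $p(a,m)$ does not fire, $m$ is not revealed, and hence $(p\wedge m)\supset a$ never produces $a$, so the circle cannot restart. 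Thus $a,m\in Cn(B)$ but $a,m\notin(Cn(B)\div\{p\}^\diamond)+\{p\}^\diamond$, which is the desired non-inclusion.

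The main obstacle, as I see it, is threading the needle between the two recovery behaviours: the deleted beliefs must be recoverable from $\visible_B(\{p\}^\diamond)$ presented as propositions (so that the chosen $\div$ genuinely satisfies Recovery), yet irrecoverable once the same material returns only as triplets. The circular dependence between the trigger $a$ and the attributive belief $m$, primed by seeding $a$ directly into the base, is precisely what manufactures this asymmetry; keeping $a$ logically independent of $p$ as $\exclude$ forces, while still making $a$ derivable from $p\wedge m$, is the delicate constraint that the construction must respect.
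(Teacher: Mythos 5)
Your counterexample is correct, and it exploits exactly the mechanism the paper relies on: the Recovery postulate re-injects $\visible_B(\{P\}^\diamond)$ as \emph{plain propositions}, whereas $+\{P\}^\diamond$ re-injects the attributive material only as \emph{triplets}, whose triggers may have been contracted away, so the association tuple of the contracted set reveals strictly less than that of $Cn(B)$ (this is precisely the one-line reason the paper gives after the observation). The paper's own (suppressed) witness is $B=(\{p_0,p_1,p_2\},\{p_0(p_1,p_2),p_0(p_2,p_1)\})$ with $\cond(p_1)=\cond(p_2)=\emptyset$ and $p_0,p_1,p_2$ logically independent: there both lost beliefs $p_1,p_2$ are themselves in $\visible_B(\{p_0\}^\diamond)$, so Success alone mandates their removal, and the two mutually-triggering triplets deadlock on re-expansion with $\{p_0\}^\diamond$. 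You instead arrange for the \emph{invisible} trigger $a$ to be lost; no postulate forces $a$ out, so your example depends on the contraction operator electing to discard it, and you need the auxiliary implication $(p\wedge m)\supset a$ precisely so that Recovery survives that election. This is a legitimate witness for a claim of the form ``there is no guarantee'' --- the listed postulates do admit such an operator, and your checks (admissibility of $(a,m)\in\mathcal{I}(p)$ against $\exclude(p)$, the fixpoint computations on both sides) are sound --- but it is the more delicate construction: a contraction that retains whatever Success does not touch (e.g.\ one built from maximal subsets containing $a$ in the sense of Theorem~\ref{representation}) would keep $a$, re-fire $p(a,m)$, and restore the inclusion on your instance, whereas the paper's instance puts the to-be-lost beliefs directly under the scope of Success. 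In short: same route, differently engineered witness; the paper's is the sturdier of the two.
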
    
\hide{ 
\begin{proof} 
    Suppose $B := (\{p_0, p_1, p_2\}, \{p_0(p_1, p_2), 
        p_0(p_2, p_1)\})$ so that 
    $\cond(p_0) = \{p_0(p_1, p_2), p_0(p_2, p_1)\}$, 
    that $\cond(p_1) = \cond(p_2) = \emptyset$ and that 
    none of $\{p_0, p_1, p_2\}$ 
    are a logical consequence of others, or a tautology. 
\end{proof}     
}
\noindent The reason 
is; if a belief set and a contracted belief set of its 
are not identical, then the association tuple associated to 
each of them can be different. \\
\indent In line with the AGM representation of 
the postulates, we define a mapping $\bigtriangleup$ 
from belief sets and propositions 
into belief sets. For any belief set $Cn(B)$ 
and any incoming information $\{P\}^\diamond$, 
we say that $Cn(B_1)$ satisfying
$\pi_0(Cn(B_1)) \subseteq \pi_0(Cn(B))$ is a maximal 
subset of $Cn(B)$ for 
 $\{P\}^\diamond$ iff 
\begin{enumerate} 
    \item  
For any $P_1 \in \visible_B(\{P\}^\diamond)$, 
$P_1 \not\in \pi_0(Cn(B_1))$ if $P_1$ is not a tautology.  
\item  For any $Cn(B_2)$, if {\small $\pi_0(Cn(B_1)) \subset \pi_0(Cn(B_2)) \subseteq 
        \pi_0(Cn(B))$}, then 
there exists some $P_a \in \visible_B(\{P\}^\diamond)$ 
such that $P_a \in \pi_0(Cn(B_2))$.  
\item {\small $Cn(B)= 
        Cn(B_1 \cup (\visible_B(\{P\}^\diamond), 
        \emptyset))$}.\footnote{This condition 
        is derivable from the first two in the AGM theory; 
        but not here, which is therefore explicitly 
        stated.}
\end{enumerate} 
We define 
$\bigtriangleup(Cn(B), \{P\}^\diamond)$ to be 
the set of all the subsets of $Cn(B)$ 
maximal for $\{P\}^\diamond$. We further define 
a function $\gamma$, so that, 
if  $\bigtriangleup(Cn(B), \{P\}^\diamond)$ is not empty, 
then 
$\gamma(\bigtriangleup(Cn(B), \{P\}^\diamond))$ 
is a sub-set of $\bigtriangleup(Cn(B), \{P\}^\diamond)$; 
or if it is empty, it is simply $Cn(B)$. 
\begin{theorem}[Representation theorem 
    of belief contraction]   
    \label{representation} 
    Let $Cn(B)$ be a belief set. 
    Then, 
    it follows 
    that $Cn(B) \div \{P\}^\diamond = 
    \bigcap(\gamma(\bigtriangleup(Cn(B), \{P\}^\diamond)))$. 
\end{theorem}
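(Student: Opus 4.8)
The plan is to adapt the partial-meet representation argument of Alchourr\'on, G\"ardenfors and Makinson to the present two-sorted setting, where a belief base carries both a propositional component $\pi_0$ and a belief-triplet component $\pi_1$, and where the role of the single sentence being retracted is played by the whole set $\visible_B(\{P\}^\diamond)$ of visible propositions. I would fix $\div$ as an arbitrary operator satisfying the contraction postulates (Closure, Success, Inclusion, Vacuity, Extensionality, Recovery, and Association update) and exhibit the canonical selection function, taking $\gamma(\bigtriangleup(Cn(B),\{P\}^\diamond))$ to be the family of those maximal subsets $Cn(B_1)$ of $Cn(B)$ for $\{P\}^\diamond$ with $Cn(B)\div\{P\}^\diamond \subseteq Cn(B_1)$ (and $Cn(B)$ in the degenerate empty case); the claimed equality is then proved by two inclusions. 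A first reduction confines everything to the propositional component: the three defining conditions of a maximal subset refer only to $\pi_0$ and $\visible_B(\{P\}^\diamond)$, and the postulates constrain $\div$ only through those data; moreover, by the Association update postulate the association tuple of $Cn(B)\div\{P\}^\diamond$ is $(\mathcal{I},\pi_0(Cn(B)\div\{P\}^\diamond),\assoc)$, so by the definition of $Cn$ and the Adequacy observation the triplet component $\pi_1$ of either side is completely determined by its propositional component. Hence it suffices to prove the equality on first components and note that both sides then carry the same association tuple, and therefore the same second component.

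For the inclusion $\subseteq$, I would first verify that $\gamma$ is a genuine selection function, i.e. that its value is non-empty whenever $\bigtriangleup(Cn(B),\{P\}^\diamond)$ is. By Inclusion, $Cn(B)\div\{P\}^\diamond \subseteq Cn(B)$; by Success every non-tautological member of $\visible_B(\{P\}^\diamond)$ is absent from it, so condition 1 holds; and by the assumed compactness of $Cn$ together with Zorn's lemma this set extends to a $\subseteq$-maximal subset meeting conditions 1 and 2. Condition 3 follows from the Recovery postulate for the inclusion $Cn(B)\subseteq Cn(B_1 \cup (\visible_B(\{P\}^\diamond),\emptyset))$, and from the fact that in a non-trivial contraction (where $P\in\pi_0(Cn(B))$) the relevant visible propositions already lie in $\pi_0(Cn(B))$ by the closure mechanism of $Cn$, which gives the converse inclusion; the purely degenerate situations are absorbed by Vacuity and the empty-$\bigtriangleup$ convention for $\gamma$. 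Any such maximal extension witnesses non-emptiness and, by construction of $\gamma$, contains $Cn(B)\div\{P\}^\diamond$, so intersecting over the selected family yields $Cn(B)\div\{P\}^\diamond \subseteq \bigcap \gamma(\bigtriangleup(Cn(B),\{P\}^\diamond))$.

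The reverse inclusion is the crux. Given $\psi\in\pi_0(Cn(B))$ with $\psi\notin\pi_0(Cn(B)\div\{P\}^\diamond)$, I must produce a selected maximal subset that omits $\psi$. Recovery gives $Cn(B)\subseteq (Cn(B)\div\{P\}^\diamond)+(\visible_B(\{P\}^\diamond),\emptyset)$, so $\psi$ lies in the logical closure of the contraction together with the visible propositions; by compactness this already follows from finitely many of them, and the deduction theorem for $\logcon$ then places the corresponding implication, a conditional whose antecedent is their conjunction, inside $Cn(B)\div\{P\}^\diamond$ via Closure. Using this implication I would run a Lindenbaum-style Zorn construction extending $Cn(B)\div\{P\}^\diamond$ to a subset of $Cn(B)$ that is maximal among those excluding every non-tautological visible proposition yet still omits $\psi$; checking that this extension also satisfies the recovery-type condition 3 (so that it genuinely lies in $\bigtriangleup(Cn(B),\{P\}^\diamond)$, hence in $\gamma(\bigtriangleup(Cn(B),\{P\}^\diamond))$) then shows $\psi\notin\bigcap\gamma(\bigtriangleup(Cn(B),\{P\}^\diamond))$ and completes the equality on first components.

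The main obstacle is exactly this last construction, and two features separate it from the classical case. We contract by a whole set $\visible_B(\{P\}^\diamond)$ rather than a single sentence, so the deduction-theorem step must be carried out only after a compactness reduction to a finite conjunction, and Extensionality (stated here for $\logcon$-equivalent visible parts) is needed to guarantee that this conjunction is a legitimate proxy. More delicately, condition 3 of maximality, which is automatic in the single-formula AGM setting, is here an independent requirement that must be re-established for the newly built remainder; I expect verifying it for the Zorn-maximal set, while simultaneously keeping $\psi$ out, to be the genuinely technical point. The bookkeeping between the propositional and triplet components, once discharged in the reduction step, should keep all of these difficulties confined to the propositional level.
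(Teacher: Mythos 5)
Your proposal follows essentially the same route as the paper: both reduce the two-component structure to a package contraction of $\visible_B(\{P\}^\diamond)$ on the propositional part (using Association update together with Adequacy to recover the triplet component) and then run the standard AGM partial-meet representation argument with the canonical selection function. The one point you defer --- verifying condition 3 of maximality for the remainder constructed in the reverse inclusion --- is precisely the condition the paper itself flags as not automatic in this setting, and the paper's own proof (which only sketches one inclusion of one direction) does not discharge it in any more detail than you do.
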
   
\begin{proof}  
    Note that the postulates modulo association tuple 
    can be reduced down to 
    package contraction \cite{Fuhrmann94} 
    of $\visible_B(\{P\}^\diamond)$. 
    However, we show cases of one direction 
    of the proof with details for not very
    straightforward ones. 
    We show that for any 
    particular belief set as results from 
    {\small $Cn(B) \div \{P\}^\diamond$}, 
    there exists some particular $\gamma$ 
    such that 
    {\small $Cn(B) \div \{P\}^\diamond =  
        \bigcap(\gamma(\bigtriangleup(Cn(B), \{P\}^\diamond)))$}. 
    Suppose, by 
    way of showing contradiction, that 
    there exists some $\alpha$ which 
    is either some belief $P_a$  or some 
    attributive belief $P_a(P_b, P_c)$ such that 
    $\alpha \in Cn(B) \div \{P\}^\diamond$ 
    and that $\alpha \not\in \pi_i(\bigcap(\gamma(\Delta(Cn(B), 
    \{P\}^\diamond))))$, where $i$ is 0 or 1, depending 
    on which $\alpha$ is. 
    Suppose $\alpha = P_a$. 
    By Closure and Inclusion, we have that $\alpha \in 
    \pi_0(Cn(B))$. Now, we consider two cases: $\alpha$ 
    is a tautology, or otherwise. 
    In the latter case, there are two possibilities: (1) 
    For all  $P_y \in \visible_B(\{P\}^\diamond)$, we have either 
    that $P_y$ is a tautology or that $P_y \not\in \pi_0(Cn(B))$;
    (2) There is $P_y \in \visible_B(\{P\}^\diamond)$ such that 
    $P_y \in Cn(B)$ and that it is not a tautology.  
For the second case, define $\conflict$ 
    to be {\footnotesize $\bigcup_{\footnotesize \textsf{for all such } P_y}(\{P_x
        \in Cn(B) 
        \ | \ P_y \in \logcon(P_x)\} 
    \cup \{P_u \in Cn(B) \ | \ 
        [\logcon(P_u) = \logcon(P_x \vee P_w)] \andMeta [P_y \in \logcon(P_x)] \andMeta
        [\exists \neg P_z \in Cn(B).P_z \in \logcon(P_w)]\})$}. 
    Then by Success,  $\alpha \not\in \conflict$. 
There are two sub-cases here.  
    If $\alpha \in \logcon(P_y)$ such that 
    $\logcon(P_y) \not= \logcon(\alpha)$ for some such 
    $P_y$, then by the first condition of maximality, 
    we can choose $\gamma$ appropriately so that 
    any selected maximal sub-set(s) include $\alpha$; 
    contradiction to the supposition. 
    Otherwise, we have that $\alpha \in 
    (Cn(B) \backslash \conflict) \backslash 
    \{P_e \in \logcon(P_y) \ | \ [P_y \in 
        \visible_B(\{P\}^\diamond)]\}$. 
    Contradiction is 
    by the third condition of maximality. 
    Similarly when 
    $\alpha = P_a(P_b, P_c)$. 
    \hide{ 
    if $\alpha$ is a tautology, then 
    trivially 
    $\alpha \in \pi_0(\cap(\gamma(\bigtriangleup(Cn(B), \{P\}^\diamond))))
    $,     which contradicts the supposition. 
    There are 
    two possibilities, otherwise. 
    The first case is when, for all $P_y \in \visible_B(\{P\}^\diamond)$, we have either 
    that $P_y$ is a tautology or that $P_y \not\in \pi_0(Cn(B))$. In this 
    case, we have that $Cn(B) \div \{P\}^\diamond = 
    Cn(B)$ by Vacuity. 
    But also 
    $Cn(B)$ 
    is the only one maximal belief set for $\{P\}^\diamond$.
    Hence $\alpha \in \pi_0(\cap(\gamma(\bigtriangleup(Cn(B), 
    \{P\}^\diamond)))) = \pi_0(Cn(B))$, contradiction to the supposition. 
    The second case is when, on the other hand, there 
    is $P_y \in \visible_B(\{P\}^\diamond)$ such that 
    $P_y \in Cn(B)$ and that it is not a tautology. 
    Let us define $\conflict$ 
    to be {\small $\bigcup_{\footnotesize \textsf{for all such } P_y}(\{P_x
        \in Cn(B) 
        \ | \ P_y \in \logcon(P_x)\} 
    \cup \{P_u \in Cn(B) \ | \ 
        [\logcon(P_u) = \logcon(P_x \vee P_w)] \andMeta [P_y \in \logcon(P_x)] \andMeta
        [\exists \neg P_z \in Cn(B).P_z \in \logcon(P_w)]\})$}. 
    Then by Success,  $\alpha \not\in \conflict$. 
        There are two sub-cases here.  
    If $\alpha \in \logcon(P_y)$ such that 
    $\logcon(P_y) \not= \logcon(\alpha)$ for some such 
    $P_y$, then by the first condition of maximality, 
    it does not have to be that $\alpha \not\in 
    X$ for all $X \in \bigtriangleup(Cn(B), \{P\}^\diamond)$;
    hence we can choose $\gamma$ appropriately so that 
    any selected maximal sub-set(s) include $\alpha$; 
    contradiction to the supposition. 
    Otherwise, we have that $\alpha \in 
    (Cn(B) \backslash \conflict) \backslash 
    \{P_e \in \logcon(P_y) \ | \ [P_y \in 
        \visible_B(\{P\}^\diamond)]\}$. 
    By the third condition of maximality, 
    however, $\alpha \in Cn(B) \div \{P\}^\diamond$ 
    implies that 
    $\alpha \in X$ for all $X \in \bigtriangleup(Cn(B), \{P\}^\diamond)$; contradiction. 
    This concludes the sub-case of when $\alpha = P_a$. 
    Similarly in case $\alpha = P_a(P_b, P_c)$. This 
    concludes the first part of the proof. \\
    \indent Into the other direction, it suffices to show that  
    for any particular $\gamma$, we have 
    some belief set $Cn(B) \div \{P\}^\diamond$ 
    such that 
    {\small $\bigcap(\gamma(\bigtriangleup(Cn(B), \{P\}^\diamond))) 
        = Cn(B) \div \{P\}^\diamond$}. Similar proof 
    strategy works in this direction, too, and is
    left to readers. 
}
    \hide{ 
    Suppose, by way of showing contradiction, 
    that there exists some $\alpha$ which 
    is either some belief $P_a$ or 
    some attributive belief $P_a(P_b, P_c)$ 
    such that 
    $\alpha \in \pi_i(\bigcap(\gamma(\bigtriangleup(Cn(B), \{P\}^\diamond))))$ and that 
    $\alpha \not\in \pi_i(Cn(B) \div \{P\}^\diamond)$, 
    where $i$ is 0 or 1, determined by whether $\alpha$ is
    a belief or an attributive belief. 
    Suppose 
    $\alpha = P_a$. If it is a tautology, 
    then by Closure, clearly 
    $\alpha \in \pi_0(Cn(B) \div \{P\}^\diamond)$; and 
    contradiction is immediate. Otherwise, if it 
    is not a tautology, 
    by the first and the third conditions
    of maximality, 
    we must have that 
    $\alpha \not\in \conflict$.  
    Since any maximal sub-set of $Cn(B)$ 
    is a sub-set of $Cn(B)$, we obviously have that 
    $\alpha \in \pi_0(Cn(B))$. 
    There are two cases to consider. The 
    first is that, for any $P_y \in \visible_B(\{P\}^\diamond)$, 
    we have that $P_y$ is a tautology or else that $P_y \not\in \pi_0(Cn(B))$, in which case 
    by Vacuity we have 
    that $\alpha \in \pi_0(Cn(B) \div \{P\}^\diamond) = 
    \pi_0(Cn(B))$; and contradiction is immediate.
    On the other hand, if there is some 
    $P_y \in \visible_B(\{P\}^\diamond)$ 
    such that $P_y \in \pi_0(Cn(B))$ and that 
    $P_y$ is not a tautology, by 
    Recovery we `can' have that  
    $\alpha \in 
    \pi_0(Cn(B) \div \{P\}^\diamond)$, since 
    $\alpha \not\in \conflict$; 
    contradiction to the supposition. This concludes 
    the first sub-case of the second proof. 
    Similarly for 
    $\alpha = P_a(P_b, P_c)$. 
     This concludes the proof.  
 }
\end{proof}  
\indent We finally list postulates for belief revision. 
When one is revising a belief set, he/she is
conscious of the propositions within $\{P\}^\diamond$ that are visible 
to him/her. If his/her belief set $Cn(B)$ 
should contain any contradicting beliefs to them, 
they should be removed. Hence, revision of $Cn(B)$ 
by $\{P\}^\diamond$, which we denote by $Cn(B) * \{P\}^\diamond$, 
perform belief contraction by $\visible_B^-(\{P\}^\diamond)$  
on $Cn(B)$, where $\visible_B^-(\{P\}^\diamond)$ 
is defined to be $\{\neg P_a \ | \ P_a \in 
    \visible_B(\{P\}^\diamond)\}$. 
However, then not $\{P\}^\diamond$ but $(\visible_B(\{P\}^\diamond), 
\bigcup_{\footnotesize P_x \in \visible_B(\{P\}^\diamond)} 
\cond(P_x)))$ expands the contracted belief set. 
\begin{enumerate}[leftmargin=0.3cm]
            \item $Cn(B) * \{P\}^\diamond = Cn(Cn(B) * \{P\}^\diamond)$ (Closure).  
    \item $\visible_B(\{P\}^\diamond) \subseteq \pi_0(Cn(B) * \{P\}^\diamond)$
        (Success 1). 
        {\it Explanation}: Any visible proposition 
        in external information is accepted.  
        \hide{
    \item {\small $
            \bigcup_{\footnotesize P_z \in 
                \visible_B(\{P\}^\diamond)}
                \cond(P_z) \subseteq  
            \pi_1(Cn(B) * \{P\}^\diamond)$} (Success 2). 
        {\it Explanation}: Any attributive
        propositions to any visible 
        proposition in external information are accepted.    
    }
    \item {\small $\forall P_x \in 
            \visible^-_B(\{P\}^\diamond).[P_x \not\in 
            Cn(\emptyset)] \rightarrow^{\dagger}
            [P_x \not\in \pi_0(Cn(B) * \{P\}^\diamond)]$} 
        (Success 2). 
        {\it Explanation}: Any belief 
        contradicting $\visible_B(\{P\}^\diamond)$ 
        is not in the revised belief set. 
        \hide{
\item 
                {\small $\forall \hat{P}(P_x, P_y) \in 
                    \{P\}^\diamond.
                    [P_y \in \visible_B(\{\neg P\}^\diamond)] 
                    \andMeta [P_y \in Cn(B)] 
                    \rightarrow^{\dagger} 
                \hat{P}(P_x, P_y) \not\in Cn(B) * \{P\}^\diamond$} 
            (Rejection).  
            {\it Explanation}: 
            Introduction of attributive 
            beliefs in the form $\hat{P}(P_x, P_y)$ in 
            $\{P\}^\diamond$\footnote{Cf. Observation 
            \ref{ob_adequacy}; as far as attributive
            propositions are concerned, it makes 
            no difference if we consider $\{P\}^\diamond$ 
            or $\{\neg P\}^\diamond$.}
            that are visible to $Cn(B)$ and that are in conflict with 
            any belief(s) in $Cn(B)$ 
            is guarded against.  
        }
        \hide{
    \item {\small $[P_0(P_1, P_2) \in Cn(B) * \{P\}^\diamond] \rightarrow^{\dagger} 
                [P_0 \in Cn(B) * \{P\}^\diamond]$}
            (Adequacy). 
            {\it Explanation}: Belief revision 
            ensures that the resultant belief 
            set be adequate.  
        }
    \item $Cn(B) * \{P\}^\diamond \subseteq Cn(B) + \{P\}^\diamond$ (Inclusion). 
                {\it Explanation}: The belief set 
                that results from revising 
                a belief set with some external information 
                forms a sub-set of the belief set 
                that results from expanding it 
                with the same information. 
            \item {\small $[\forall 
                    P_x \in
                    \visible^-_{B}(\{P\}^\diamond).
                    P_x \not\in \pi_0(Cn(B))] 
                \rightarrow^{\dagger}  
                [Cn(B) * \{P\}^\diamond = 
                Cn(B) + \{P\}^\diamond]$} (Vacuity).   
            {\it Explanation}: If no visible propositions of 
                an external information 
                are in conflict with any beliefs in $Cn(B)$, 
                then the result of 
                revising $Cn(B)$ with it 
                is the same as simply expanding  
                $Cn(B)$ with it.   
            \item {\small $[Cn(\{P_1\}^\diamond) =
                Cn(\{P_2\}^\diamond)]
                 \rightarrow^{\dagger} 
                [Cn(B) * \{P_1\}^\diamond = 
                Cn(B) * \{P_2\}^\diamond]$}
                (Extensionality).  
                {\it Explanation}: If one external information 
                is identical with  
                another in content under $Cn$, then 
                revising $Cn(B)$ with either 
                of them leads to the same belief set. 
                Note the difference from 
                Extensionality for $\div$. Here 
                latent beliefs also matter.  
\item If the  association 
            tuple for $Cn(B)$ is 
            $(\mathcal{I}, \pi_0(Cn(B)), \assoc)$, 
            then that to $Cn(B) * \{P\}^\diamond$ 
            is $(\mathcal{I}, \pi_0(Cn(B) * \{P\}^\diamond), 
            \assoc)$ (Association update). 
        \end{enumerate}
An analogue of Levi identity \cite{Levi77} does not hold here.  
\begin{observation} 
    \label{ob} 
    It is not the case that $Cn(B) * \{P\}^\diamond =  
    (Cn(B) \div \{\neg P\}^\diamond) 
    + \{P\}^\diamond$. 
\end{observation}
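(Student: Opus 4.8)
The plan is to refute the identity by exhibiting a single belief base $B$ and external information $\{P\}^\diamond$ for which the two sides disagree; since the statement is a negated universal, one counterexample suffices. The structural reason the identity must fail is that the contraction hidden inside $*$ and the one written out on the right aim at \emph{different} targets. Revision contracts $Cn(B)$ by $\visible^-_B(\{P\}^\diamond) = \{\neg P_a \mid P_a \in \visible_B(\{P\}^\diamond)\}$, i.e.\ by the negations of \emph{all} visible propositions of $\{P\}^\diamond$, whereas the right-hand side contracts by $\visible_B(\{\neg P\}^\diamond)$, whose attributive part is governed by $\assoc(\neg P)$ and is in general unrelated to the set $\{\neg P_a\}$. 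I would exploit exactly this gap, so that the right-hand contraction leaves in place a belief whose negation later clashes with a latent belief attributive to $P$.

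Concretely, I would take three mutually independent atoms $p, p_1, p_2$ (none a logical consequence of another), and set $\mathcal{I}(p) \ni (p_1, p_2)$ while $\mathcal{I}(\neg p) = \emptyset$; the side-condition on $\mathcal{I}$ is met since $p_1, p_2 \notin \exclude(p)$. Then $\cond(p) = \{p(p_1,p_2)\}$ and $\cond(\neg p) = \emptyset$. Let $\{P\}^\diamond = (\{p\}, \{p(p_1,p_2)\})$ and choose $B$ so that $\pi_0(Cn(B)) \supseteq \{\neg p,\ \neg p_2,\ p_1\}$ and is consistent. Because $p_1 \in \pi_0(Cn(B))$, the triplet makes $p_2$ visible, so $\visible_B(\{p\}^\diamond) = \{p, p_2\}$ and $\visible^-_B(\{p\}^\diamond) = \{\neg p, \neg p_2\}$, while $\visible_B(\{\neg p\}^\diamond) = \{\neg p\}$.

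I would then compute both sides. On the left, $*$ first contracts by $\{\neg p, \neg p_2\}$, deleting both $\neg p$ and $\neg p_2$ (Success), and then expands with the visible part, which supplies $p$ and $p_2$; the outcome is consistent --- indeed it must be, by the Consistency postulate for $*$, since $p$ is consistent. On the right, $Cn(B) \div \{\neg p\}^\diamond$ contracts only by $\{\neg p\}$: since $\neg p_2$ does not entail $\neg p$, it lies outside $\conflict$ and therefore \emph{survives} the contraction. The subsequent expansion $+\,\{p\}^\diamond$ reinserts $p$ together with the triplet $p(p_1,p_2)$; as $p_1$ was never removed, one turn of the $Cn$ fixpoint promotes $p_2$ into $\pi_0$ through $\mathsf{A}^0$, so the right-hand belief set contains both $p_2$ and $\neg p_2$ and is inconsistent. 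A consistent and an inconsistent belief set cannot be equal, which establishes the Observation.

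The step I expect to be the main obstacle is the last one: verifying rigorously that the latent belief $p_2$ genuinely surfaces on the right. This needs (i) that $p_1$ is retained by the right-hand contraction --- which holds because $p_1 \not\models \neg p$ keeps it out of $\conflict$ --- and (ii) that the fixpoint definition of $Cn$ indeed moves $p_2$ from a belief triplet in $\pi_1$ into $\pi_0$ once $p$ and $p_1$ are both present, so that the clash with the retained $\neg p_2$ actually materialises rather than lying dormant. A secondary check, subsumed by the consistency argument but worth noting, is that the two operations also induce different association tuples (as already remarked after Recovery), which gives an independent reason the sets can differ even where no inconsistency arises.
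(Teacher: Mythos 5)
The paper offers no proof of this observation at all: it only gestures at the reason in the remark following the failure of Recovery (the association tuples differ) and in the prose preceding the revision postulates, and it supplies the \emph{corrected} identity in Theorem~\ref{identity}, where the contraction target is $(\visible_B^-(\{P\}^\diamond),\emptyset)$ and the expansion payload is only the visible part of $\{P\}^\diamond$ together with $\bigcup_{P_x}\cond(P_x)$. Your structural diagnosis --- that the naive Levi identity contracts by the wrong thing ($\visible_B(\{\neg P\}^\diamond)$ rather than the negations of all visible propositions of $\{P\}^\diamond$) and expands by the wrong thing (all of $\{P\}^\diamond$ rather than its visible part) --- is exactly the gap Theorem~\ref{identity} is designed to close, and your concrete counterexample ($\neg p,\neg p_2,p_1$ in the belief set, $\{p\}^\diamond=(\{p\},\{p(p_1,p_2)\})$, $\cond(\neg p)=\emptyset$) uses the same inconsistency-through-revelation mechanism as the paper's own proof of the ``No AGM consistency'' observation. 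So your argument is in the paper's spirit but goes strictly beyond what the paper writes down.

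Two steps need repair, though neither is fatal. First, you justify the consistency of the left-hand side by ``the Consistency postulate for $*$'', but the paper's revision operator has no such postulate --- it is deliberately dropped in favour of Success~2, and the very next observation in the paper shows $Cn(B)*\{P\}^\diamond$ \emph{can} be inconsistent. In your example the left side happens to be consistent, but you must verify this directly: Success~2 removes $\neg p$ and $\neg p_2$, Success~1 adds $p$ and $p_2$, and the fixpoint closure over $\{p,p_1,p_2\}$ introduces no clash. Second, you argue that $\neg p_2$ survives the right-hand contraction because it ``lies outside $\conflict$''; but $\conflict$ is a device internal to the proofs of Theorems~\ref{representation} and~\ref{identity}, not part of the definition of $\div$. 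What you actually need is that some admissible $\gamma$ selects maximal subsets for $\{\neg p\}^\diamond$ that retain $\neg p_2$ (such subsets exist since $\neg p_2$ together with $p_1$ does not entail $\neg p$, and adjoining $\neg p$ back recovers $Cn(B)$), which suffices because the observation only denies that the identity holds in general. With those two substitutions the counterexample is complete.
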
    
\noindent Also note the following observation. 
\begin{observation}[No AGM consistency upon revision] 
    Even if a belief set $Cn(B)$ and all the elements of $\visible_B(\{P\}^\diamond)$ are consistent, $Cn(B) * \{P\}^\diamond$ may 
    be inconsistent.  
\end{observation}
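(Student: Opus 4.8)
The statement is an existential one, so the plan is to build an explicit counterexample that formalises the informal ``bird eats'' scenario of the introduction, letting a latent belief clash with an existing visible belief only once revision fires the association closure. I work over three logically independent atoms $p_0, p_1, p_2$. I fix the base association map by setting $\mathcal{I}(p_0) = \{(p_1,p_2)\}$ and $\mathcal{I}(\ell)=\emptyset$ for every other literal $\ell$. Since the three atoms are mutually independent we have $p_1,p_2 \notin \exclude(p_0)$, so this respects the constraint on $\mathcal{I}$, and it yields $\cond(p_0) = \{p_0(p_1,p_2)\}$ while $\cond(p_1)=\emptyset$. I then take the belief base $B = (\{p_0, \neg p_2\}, \emptyset)$ and the input $\{p_1\}^\diamond = (\{p_1\}, \emptyset)$.

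The first step is to certify the two consistency hypotheses. Running the fixpoint of Definition~\ref{belief_set} on $B$, the zeroth stage gives $\pi_0(Cn^0(B)) = \logcon(\{p_0, \neg p_2\})$ and places $p_0(p_1,p_2)$ into $\pi_1$; but the activation set $\mathsf{A}^0$ is empty, because the trigger $p_1$ of that triplet is \emph{not} in $\pi_0(Cn^0(B))$. Hence the iteration stabilises immediately, $Cn(B)$ is consistent, and $p_2$ is genuinely latent. On the input side, $\{p_1\}^\diamond$ carries no attributive proposition, so $\visible_B(\{p_1\}^\diamond) = \{p_1\}$, which is consistent. Both hypotheses of the observation therefore hold.

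The second step derives the inconsistency from the revision postulates, bypassing the (non-valid) Levi identity of Observation~\ref{ob}. Since $\visible_B^-(\{p_1\}^\diamond) = \{\neg p_1\}$ and $\neg p_1 \notin \pi_0(Cn(B))$, the revision Vacuity postulate collapses the revision to an expansion: $Cn(B) * \{p_1\}^\diamond = Cn(B) + \{p_1\}^\diamond$, which by Augmentation equals $Cn(Cn(B) \cup \{p_1\}^\diamond)$. Re-running the fixpoint, now on the first component $\logcon(\{p_0, \neg p_2, p_1\})$, the triplet $p_0(p_1,p_2)$ \emph{does} fire, because both $p_0$ and its trigger $p_1$ are present; so $p_2 \in \mathsf{A}^0$, and closing $\{p_0, p_1, \neg p_2, p_2\}$ under $\logcon$ produces all of $\props$. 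The revised set is thus inconsistent, which establishes the observation.

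The delicate point, and the one I expect to need the most care, is that the contradiction is never a direct conflict between the input and the prior beliefs, but an indirect one manufactured by the association closure: neither the contraction-by-$\visible_B^-$ that defines revision, nor equivalently the Vacuity reduction to expansion, ever touches the clashing pair $\{\neg p_2, p_2\}$. The whole argument therefore rests on the two fixpoint computations behaving oppositely, with the trigger $p_1$ absent before revision (keeping $p_2$ latent and $Cn(B)$ consistent) and present afterwards (revealing $p_2$), together with the subsidiary check that $\neg p_1 \notin \pi_0(Cn(B))$, which is exactly what licenses the clean reduction to expansion. Verifying these two closures carefully, rather than any algebraic manipulation, is where the real content of the proof lies.
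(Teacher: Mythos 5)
Your counterexample is exactly the one the paper gives ($Cn(\{p_0,\neg p_2\},\{p_0(p_1,p_2)\})$ with input $(\{p_1\},\emptyset)$ and the same $\cond$ assignments), so the approach is essentially identical; the paper merely states the instance and leaves the verification implicit, whereas you correctly work through the two fixpoint computations and the Vacuity reduction to expansion. Your proposal is correct and matches the paper's proof.
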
  
\begin{proof} 
  Suppose that we have the following belief set: 
  {\small $Cn(\{p_0, \neg p_2\}, \{p_0(p_1, p_2)\})$} 
  and that $\{P\}^\diamond = (\{p_1\}, \emptyset)$. 
  Suppose that 
  $\cond(p_0) = \{p_0(p_1, p_2)\}$, that 
  $\cond(p_1) = \cond(p_2) = \emptyset$, and that 
  none of $p_0, \ldots, p_2$ 
  are a logical consequence of any others. 
\end{proof}  
\noindent 
Hence, we explicitly have the postulate 
Success 2 for the removal of beliefs in $Cn(B)$, 
instead of the implicit removal through the AGM Consistency postulate.  
Although out of the scope of this work, resolution of this kind of inconsistencies that are 
triggered by 
revelation of latent parts of existing beliefs 
is an interesting problem. 
Despite Observation \ref{ob}, we have the following 
identity theorem. 
\begin{theorem}[Identity] 
    \label{identity}   
    It holds that \\
    {\small $Cn(B) * \{P\}^\diamond = 
        (Cn(B) \div (\visible_B^-(\{P\}^\diamond), \emptyset)) + 
        (\visible_B(\{P\}^\diamond), 
        \bigcup_{\footnotesize 
            P_x \in \visible_B(\{P\}^\diamond)}\cond(P_x))$}.  
\end{theorem}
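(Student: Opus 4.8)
\noindent The plan is to exploit the operational reading given just before the revision postulates: revision is performed as a package contraction by the negated visible part followed by an expansion by the visible part together with its attributive beliefs. Accordingly I would prove the stated equation by verifying that this composite meets every revision postulate (equivalently, by two set inclusions), after first reducing the whole claim to an equality of the proposition components. Throughout I abbreviate $V := \visible_B(\{P\}^\diamond)$, $V^- := \visible_B^-(\{P\}^\diamond)$, $C := \bigcup_{P_x \in V}\cond(P_x)$, $Cn(B') := Cn(B) \div (V^-,\emptyset)$, and let $R := Cn(B') + (V,C)$ denote the right-hand side. Since the contraction input $(V^-,\emptyset)$ carries no belief triplets, its visible part collapses to $V^-$, so I may treat $Cn(B')$ as a package contraction by $V^-$ and apply the contraction postulates and Theorem~\ref{representation} to it directly.

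\noindent The first move is to argue that it suffices to establish $\pi_0(Cn(B)*\{P\}^\diamond) = \pi_0(R)$. Both $Cn(B)*\{P\}^\diamond$ and $R$ are closed belief sets: the former by the revision Closure postulate, the latter because, by Augmentation, $R = Cn(Cn(B') \cup (V,C))$. By Definition~\ref{belief_set}, in a closed belief set the triplet component $\pi_1$ is completely fixed by $\pi_0$ through $\assoc$. Moreover the revision Association update postulate fixes the tuple of the left-hand side as $(\mathcal{I}, \pi_0(Cn(B)*\{P\}^\diamond), \assoc)$, whereas the contraction and expansion Association updates fix that of $R$ as $(\mathcal{I}, \pi_0(R), \assoc)$; both preserve $\mathcal{I}$ and $\assoc$. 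Hence once the two proposition components are shown equal, the two tuples coincide and the two triplet components coincide with them. The only delicacy here is that $\assoc$ is non-deterministic on disjunctions (clauses 3--6 of its definition), but since the common tuple fixes both $\mathcal{I}$ and the shared proposition set, the same clause is selected on each side and no ambiguity survives.

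\noindent The routine postulates then fall out quickly. Closure holds by the display above; Success 1 ($V \subseteq \pi_0(R)$) holds because the expansion introduces $V$; Inclusion ($R \subseteq Cn(B)+\{P\}^\diamond$) follows from $Cn(B') \subseteq Cn(B)$ (contraction Inclusion) together with the observation that the closure producing $Cn(B)+\{P\}^\diamond$ already reveals every member of $V$ and of $C$, since each revealed element of $V$ is a $P_2$ with $P(P_1,P_2) \in \cond(P)$ and $P_1 \in \pi_0(Cn(B))$; Vacuity reduces to the vacuity of the contraction when no $P_x \in V^-$ lies in $\pi_0(Cn(B))$; and Extensionality follows from that of $+$ and $\div$. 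The contentful proposition-level inclusions use Theorem~\ref{representation}: a proposition of $Cn(B)*\{P\}^\diamond$ not forced by $V$ traces back to $Cn(B)$, and one argues it lies outside the $\conflict$ set of that proof and so survives the contraction by $V^-$ into $\pi_0(R)$, while a proposition surviving $Cn(B')$ re-enters $Cn(B)*\{P\}^\diamond$ since it neither was removed nor contradicts the freshly added $V$.

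\noindent The step I expect to be the main obstacle is Success 2 in the presence of possible inconsistency. Unlike the classical Levi identity, I cannot first secure a consistent contracted set and then reason about a consistent expansion: the Observation on the failure of AGM consistency shows that the expansion's closure can reveal a latent belief (there, $p_2$) that contradicts a surviving one ($\neg p_2$), so both sides may be inconsistent, and the failure of the Levi identity (Observation~\ref{ob}) warns that the bookkeeping of attributive beliefs is genuinely asymmetric. The crux is therefore to pin down exactly how the non-standard closure $\logcon$ (taken over single propositions and pairs, not arbitrary subsets) behaves once a latent contradiction is materialised, and to check that no non-tautological $P_x \in V^-$ is re-admitted by this closure even when $R$ is inconsistent, so that Success 2 survives. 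I would isolate this as a lemma about $\logcon$ on $\pi_0(Cn(B')) \cup V$ and its revealed set $\mathsf{A}^k$, and expect the rest of the $\pi_0$-equality, and hence the full identity via the reduction above, to follow by the contraction representation theorem.
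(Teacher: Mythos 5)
Your route is genuinely different from the paper's. The paper does not verify the revision postulates for the composite; it fixes an $\alpha$ (a belief or an attributive belief) assumed to lie in the composite $Y$ but not in $Cn(B) * \{P\}^\diamond$, and chases it through a case split (tautology; member of $(\visible_B(\{P\}^\diamond), \bigcup_{P_x}\cond(P_x))$; otherwise), reusing the $\conflict$ set from the proof of Theorem~\ref{representation} and closing the residual case by deriving, from Success~1 and Closure of $*$, that $P_z \vee P_u$ must after all be in the revised set --- and it explicitly proves only that one direction. Your plan instead reduces the claim to equality of the $\pi_0$-components via the Association update postulates (a clean observation the paper never states, and sound given that $\pi_1$ of a closed base is $\bigcup_{P \in \pi_0}\cond(P)$), and then verifies the revision postulates for $R$ one by one in the classical Levi-identity style. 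Your treatment of Closure, Success~1, Inclusion, Vacuity and Extensionality is unobjectionable.

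The gap is exactly where you locate it, and it is not closed by what you write. The lemma you ``expect'' --- that no non-tautological member of $\visible_B^-(\{P\}^\diamond)$ is re-admitted by the closure of $\pi_0(Cn(B) \div (\visible_B^-(\{P\}^\diamond),\emptyset)) \cup \visible_B(\{P\}^\diamond)$ even when the expansion materialises a latent contradiction --- is not available under the paper's definitions. $\logcon$ is the set of logical consequences of pairs of elements, so in precisely the situation of the No-AGM-consistency observation (the expansion reveals some $P_2$ while $\neg P_2$ has survived the contraction) the pair $\{P_2, \neg P_2\}$ yields every proposition, including every member of $\visible_B^-(\{P\}^\diamond)$; Success~2 then fails for $R$ outright, and your postulate-verification strategy stalls at its crux. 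The paper evades this because its element-chase only needs ``$\alpha \in Y$ and $\alpha \notin Cn(B) * \{P\}^\diamond$ is contradictory''; it never certifies that $Y$ itself satisfies Success~2. To rescue your route you would need either an added consistency hypothesis on $R$, a paraconsistent reading of $\logcon$, or a retreat to the paper's direct containment argument for this one postulate.
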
 
\begin{proof}   
We show cases of one direction. 
Details are left to readers. 
Let $Y$ denote
{\small $(Cn(B) \div (\visible_B^-(\{P\}^\diamond), 
\emptyset)) + 
(\visible(\{P\}^\diamond), \bigcup_{\footnotesize 
    P_x \in \visible_B(\{P\}^\diamond)}\cond(P_x))$} for space.  
    \hide{ 
    We show that for all the possibilities  
    of 
    {\small $Cn(B) * \{P\}^\diamond$}, 
    there exists some $Y$ such that 
    {\small $Cn(B) * \{P\}^\diamond 
=  
        Y$}, 
    to begin with. 
    By way of showing contradiction,  
    suppose some $\alpha$ which is 
    either a belief $P_a$ or an attributive belief 
    $P_a(P_b, P_c)$. Suppose 
    that $\alpha \in \pi_i(Cn(B) * \{P\}^\diamond)$ and that 
    $\alpha \not\in \pi_i(Y)$, where 
    $i$ is 0 or 1, depending on whether
    $\alpha$ is a belief or an attributive belief. 
    If $\alpha$ is 
    a tautology, since both $\div$ and $+$ 
    ensure closure, we have $\alpha \in 
    \pi_0(Y)$, contradiction 
  to the supposition. Otherwise, if 
  it is not a tautology, there are a few cases. 
  The first case is when 
  $\alpha \in \{P\}^\diamond$. In this case, 
  by Augmentation of $+$ and because   
  we have $\{P\}^\diamond \subseteq 
  (\visible_B(\{P\}^\diamond), \bigcup_{\footnotesize 
      P_x \in 
      \visible_B(\{P\}^\diamond)}\cond(P_x))$,  contradiction is immediate.
  The second case is when 
  $\forall P_y \in \visible^-_B(\{P\}^\diamond).
  P_y \not\in \pi_0(Cn(B))$, in which case we have 
  $Cn(B) * \{P\}^\diamond = 
  Cn(B) + \{P\}^\diamond$ by Vacuity. 
  Then also by Vacuity and a simple inference we have 
  {\small $Y = 
       Cn(B) + \{P\}^\diamond$}. 
  But then contradiction is immediate. 
  For the third case, consider 
  that there is some $P_y \in \visible^-_B(\{P\}^\diamond)$ such that 
  $P_y \in \pi_0(Cn(B))$ and that $P_y$ is not a tautology. 
  Let us define $\conflict$ 
    to be $\bigcup_{\footnotesize \textsf{for all such } P_y}(\{P_x
        \in Cn(B) 
        \ | \ P_y \in \logcon(P_x)\} 
    \cup \{P_u \in Cn(B) \ | \ 
        [\logcon(P_u) = \logcon(P_x \vee P_w)] \andMeta [P_y \in \logcon(P_x)] \andMeta
        [\exists \neg P_z \in Cn(B).P_z \in \logcon(P_w)]\})$. 
  By Closure and Success 2 of $*$, 
  we must have that $\alpha \not\in \conflict$. 
Then, by Inclusion, Recovery, and Closure of $\div$, there is an outcome 
of $\div$ 
such that 
    $\alpha \in \pi_i(Cn(B) \div \{P\}^\diamond)$. 
    Then by Augmentation of $+$, we have  
    that $\alpha \in \pi_i(Y)$; contradiction to the supposition. 
    This concludes the first part of the proof.\\ 
}
    Show that for any outcome
    of $Y$, there is 
    an outcome of $Cn(B) * \{P\}^\diamond$ such that 
    $Y = Cn(B) * \{P\}^\diamond$.  
    To show contradiction, suppose 
    some $\alpha$ which is either a 
    belief $P_a$ or an attributive belief 
    $P_a(P_b, P_c)$. Suppose that 
    $\alpha \in \pi_i(Y)$ and 
    that $\alpha \not\in Cn(B) * \{P\}^\diamond$.  
    Three cases here: $\alpha$ is a tautology, 
    $\alpha \in \pi_i(\visible_B(\{P\}^\diamond), \bigcup_{\footnotesize P_z \in \visible_B(\{P\}^\diamond)}
    \cond(\{P\}^\diamond))$, or otherwise. 
    In the last case, there are sub-cases: 
    either 
    $\forall P_y \in \visible^-_B(\{P\}^\diamond).
    P_y \not\in \pi_0(Cn(B))$; or otherwise. 
    For the second case, 
    define $\conflict$ 
    to be {\small $\bigcup_{\footnotesize \textsf{for all such } P_y}(\{P_x
        \in Cn(B) 
        \ | \ P_y \in \logcon(P_x)\} 
    \cup \{P_u \in Cn(B) \ | \ 
        [\logcon(P_u) = \logcon(P_x \vee P_w)] \andMeta [P_y \in \logcon(P_x)] \andMeta
        [\exists \neg P_z \in Cn(B).P_z \in \logcon(P_w)]\})$.} 
    By Closure and Success 
    of $\div$, we must have that 
    $\alpha \not\in \conflict$.  
    There are two sub-sub-cases: (A) 
    $\alpha \in  \pi_i(Cn(B) \div (\visible^-_B(\{P\}^\diamond), \emptyset))$; (B) otherwise.  
For the latter, we have, in case 
    $\alpha = P_a$ (similar 
    if it is $P_a(P_b, P_c)$), that $\bigvee^{\dagger}_{\footnotesize P_z \in \conflict} ([\alpha \in \logcon(P_z)] \andMeta 
    [\logcon(\alpha) \not= \logcon(P_z)])$. 
    That is, $\logcon(\alpha) = \logcon(P_z  \vee P_u) \not= 
    \logcon(P_z)$ for 
    some $P_u$. 
    But by Success 1 and Closure of $*$, $\neg P_z$ is in 
    $Cn(B) * \{P\}^\diamond$.  
    Hence by Closure of $*$, $P_u$ and also  
    $P_z \vee P_u$ are  in $Cn(B) * \{P\}^\diamond$. 
    Contradiction.  
    \hide{ 
    Now, if $\alpha \in \pi_i(Cn(B) \div (\visible^-_B(\{P\}^\diamond), \emptyset))$, then by 
    Inclusion of $\div$, 
    we have $\alpha \in \pi_i(Cn(B))$. However, 
    then we can have some $Cn(B) * \{P\}^\diamond$ 
    such that $\alpha \in Cn(B) * \{P\}^\diamond$ 
    since $\alpha$ does not have to be removed 
    via Success 2 of $*$; contradiction 
    to the supposition. 
    Otherwise, if $\alpha \not\in 
    \pi_i(Cn(B) \div  (\visible^-_B(\{P\}^\diamond), \emptyset))$,
    then we have, in case 
    $\alpha = P_a$ (similar 
    if it is $P_a(P_b, P_c)$), that $\bigvee^{\dagger}_{\footnotesize P_z \in \conflict} ([\alpha \in \logcon(P_z)] \andMeta 
    [\logcon(\alpha) \not= \logcon(P_z)])$. 
    That is, $\logcon(\alpha) = \logcon(P_z  \vee P_u) \not= 
    \logcon(P_z)$ for 
    some $P_u$. 
    But by Success 1 and Closure of $*$, $\neg P_z$ is in 
    $Cn(B) * \{P\}^\diamond$.  
    Hence by Closure of $*$, $P_u$ and also  
    $P_z \vee P_u$ are  in $Cn(B) * \{P\}^\diamond$. 
    Contradiction.  
}
    \hide{ 
    If $\alpha$ is a tautology, then 
    by Closure of $*$, we have that $\alpha \in Cn(B) * \{P\}^\diamond$, and contradiction is immediate.  
    Or if $\alpha \in \pi_i(\visible_B(\{P\}^\diamond), \bigcup_{\footnotesize P_z \in \visible_B(\{P\}^\diamond)}
    \cond(\{P\}^\diamond))$, then 
    by Success 1 and Closure of $*$, we have $\alpha \in \pi_i(Cn(B) * \{P\}^\diamond)$, contradiction.
    Otherwise, we consider sub-cases.
    If $\forall P_y \in \visible^-_B(\{P\}^\diamond).
    P_y \not\in \pi_0(Cn(B))$, then 
    by Vacuity, contradiction is immediate. 
    Otherwise, let us define $\conflict$ 
    to be $\bigcup_{\footnotesize \textsf{for all such } P_y}(\{P_x
        \in Cn(B) 
        \ | \ P_y \in \logcon(P_x)\} 
    \cup \{P_u \in Cn(B) \ | \ 
        [\logcon(P_u) = \logcon(P_x \vee P_w)] \andMeta [P_y \in \logcon(P_x)] \andMeta
        [\exists \neg P_z \in Cn(B).P_z \in \logcon(P_w)]\})$. 
    By Closure and Success 
    of $\div$, we must have that 
    $\alpha \not\in \conflict$. 
    Now, if $\alpha \in \pi_i(Cn(B) \div (\visible^-_B(\{P\}^\diamond), \emptyset))$, then by 
    Inclusion of $\div$, 
    we have $\alpha \in \pi_i(Cn(B))$. However, 
    then we can have some $Cn(B) * \{P\}^\diamond$ 
    such that $\alpha \in Cn(B) * \{P\}^\diamond$ 
    since $\alpha$ does not have to be removed 
    via Success 2 of $*$; contradiction 
    to the supposition. 
    Otherwise, if $\alpha \not\in 
    \pi_i(Cn(B) \div  (\visible^-_B(\{P\}^\diamond), \emptyset))$,
    then we have, in case 
    $\alpha = P_a$ (similar 
    if it is $P_a(P_b, P_c)$), that $\bigvee^{\dagger}_{\footnotesize P_z \in \conflict} ([\alpha \in \logcon(P_z)] \andMeta 
    [\logcon(\alpha) \not= \logcon(P_z)])$. 
    That is, $\logcon(\alpha) = \logcon(P_z  \vee P_u) \not= 
    \logcon(P_z)$ for 
    some $P_u$. 
    But by Success 1 and Closure of $*$, $\neg P_z$ is in 
    $Cn(B) * \{P\}^\diamond$.  
    Hence by Closure of $*$, $P_u$ and also  
    $P_z \vee P_u$ are  in $Cn(B) * \{P\}^\diamond$. 
    Contradiction. 
    This concludes the proof of one direction.  
}
\end{proof}    
\noindent By Theorem \ref{representation} and Theorem \ref{identity}, 
we also gain the representation theorem for belief revision.     
We may add the following supplementary postulates. These 
are all adaptations of the AGM supplementary
postulates (Figure 
\ref{postulates}). 
For any $P_1, P_2$, $Cn(B)$ and 
$\visible_B(\{P\}^\diamond)$,  
let us assume for space that\linebreak ${<B,P,P_1 \wedge P_2, P_1>}$
is almost the same as $\visible_B(\{P\}^\diamond)$ 
except that all the occurrences of $P_1 \wedge P_2 \in \visible_B(\{P\}^\diamond)$ 
are replaced with $P_1$. \\
(For belief contraction)
\begin{enumerate}[leftmargin=0.46cm]
    \item[\textsf{A}] {\small $\forall P_1 \wedge P_2 
            \in \visible_B(\{P\}^\diamond).  
        [P_1 \not\in Cn(B) \div 
        \{P\}^\diamond] \rightarrow^{\dagger} 
        [Cn(B) \div \{P\}^\diamond\\ \subseteq 
        Cn(B) \div <B,P,P_1 \wedge P_2, P_1>
        $} (Conjunctive inclusion).  
\item[\textsf{B}] {\small $
        \forall P_1 \wedge P_2 \in \visible_B(\{P\}^\diamond).
        (Cn(B) \div {<B,P,P_1 \wedge P_2, P_1>
            )} \cap 
        (Cn(B) \div <B,P, P_1 \wedge P_2, P_2>
        ) \subseteq 
        Cn(B) \div \{P\}^\diamond$}. 
    (Conjunctive overlap). 
\end{enumerate} 
(For belief revision)
\begin{enumerate}[leftmargin=0.46cm]
    \item[\textsf{A}] {\small $\forall 
            P_1 \wedge P_2 \in \visible_B(\{P\}^\diamond).
            Cn(B) * \{P\}^\diamond \subseteq  
            (Cn(B) * <B, P, P_1 \wedge P_2, P_1>)
            + (P_2, \cond(P_2))
            $} (Super-expansion).  
    \item[\textsf{B}] {\small $\forall P_1 \wedge P_2 
            \in \visible_B(\{P\}^\diamond).$}\\
        {\small $[{P_1 \not\in 
                {Cn(B)} * <B, P, P_1 \wedge P_2, P_2>}
                ] 
            \rightarrow^{\dagger} [
            (Cn(B) * {<B,P,P_1 \wedge P_2, P_2>
                }) + 
            (P_2, \cond(P_2)) 
            \subseteq 
            Cn(B) * \{P\}^\diamond]$} 
        (Sub-expansion). 
\end{enumerate}  
As in \cite{Makinson85}, addition of these 
postulates ensures that $\gamma$ selects 
the best elements under some criteria. 
Specifically 
{\small $\gamma(\Delta(Cn(B), \{P\}^\diamond)) 
    = \{Cn(B_a) \in \Delta(Cn(B), \{P\}^\diamond) \ | \  
    \forall Cn(B_b) \in \Delta(Cn(B), \{P\}^\diamond). 
    Cn(B_b) \preceq Cn(B_a)\}$} 
    for a total pre-order $\preceq$ over 
    $\Delta(Cn(B), \{P\}^\diamond)$.  
\section{Conclusion with a concluding example}      
\begin{wrapfigure}{r}{0.13\textwidth} 
         \begin{center}
             \resizebox{\linewidth}{!}{
     \begin{tikzpicture} 
           \draw[black,rotate=180,l-system={rule set={X->X+YF,Y->FX-Y},
    step=2pt,angle=90,axiom=FX,order=13}] l-system;    
\node (A) [fill=gray] {};
\end{tikzpicture}    
\begin{tikzpicture}[node distance=0.5cm,
                    semithick,scale=1.5]
  \tikzstyle{every state}=[shape=circle,fill=red,draw=none,text=white]
  \node (A) [fill=red] { };
  \node (B) [below of=A,fill=blue] { };
  \node (C) [below of=B,fill=green]{ }; 
\end{tikzpicture}   
}  
\end{center} 
 \label{fig:subfig1}
 \end{wrapfigure}  
Let us conclude this work with an example. Suppose a scene in an imaginary game application. 
 A player can visit three towns: Town A, Town B and Town C. 
In Town A, he/she obtains a box. 
 On one face of it, 
 there are red, blue and green buttons, as well as 
 a dark pattern. There is 
 a key hole in the pattern, as indicated in grey 
 in the right figure.  But at first the spot is black, 
 just as the rest of the pattern is. 
 Now, if he/she presses the buttons in certain order - 
 red, blue, red, and then green - the inner 
 machinery of the box will actuate and the location 
 of the key hole is illuminated. Whether or not 
 the key hole is being illuminated, if 
 a player has a matching 
 key, it can be inserted into the key hole, and the 
 box will open. Inside the box is a rare item. 
 In Town B, a player obtains the key. In Town C, 
 he/she obtains an enigmatic note: `red, blue, red, green.'
 Now, all of these are the details known to the game developers, 
 but not to the first-time game players.  
Consider the following propositions.  
\begin{multicols}{2}  
    {\small 
  \begin{enumerate}[leftmargin=0.4cm]
     \item $p_1$: I have a key.  
     \item $p_2$: There is a key hole. 
     \item $p_3$: I apply the key to the key hole. 
     \item $p_4$: I have a box. 
     \item $p_5$: There are three buttons and a dragon curve. 
     \item $p_6$: There are red, blue and green buttons.  
      \item $p_7$: I have a note having 4 words on. 
      \item $p_8$: There are 4 words: red, blue, red, green, stated in 
          this order.  
      \item $p_9$: Pressing buttons: red, blue, red, and green 
          in this order opens a box.  
      \item $p_{10}$: There is a rare item. 
     \end{enumerate}    
 }
 \end{multicols}
 Suppose that a player 
 visits 
 Town B, then Town A, and then Town C. 
 {\it The first problem (concerning what-is-perceived-is-all-that-there-is)}: 
 Suppose that 
   he/she conceives 
   the beliefs, $p_1, p_2 \supset p_3$ (in Town B), $p_4, p_5$
   (in Town A) and $p_7$ (in Town C) in this order.  
   These are his/her perception of the items. 
   But if we suppose 
   that what is not perceived at the time 
   of perception will have no effect,
   then the player, in Town A, only noted 
   that there were three buttons and a dragon curve 
   on the box. 
   Now, because the critical information, the three 
   colours of the buttons, was not recorded, he/she, upon 
   seeing the note in Town C, did not notice 
   the significance of the 4 words. 
   Consequently, he/she conceived 
   only $p_7$ and not $p_8$. 
   But, then, there is no proper continuity 
   from $p_5 \wedge p_7$ to
   $p_9$ which he/she does not conceive. Consequently, with the 5 beliefs, 
   he/she does not open the box. However, 
   this is at odds with reality: 
 any sensible game player should have felt $p_9$, 
 even if his/her original perception about 
 the key was $p_1$; about the box was $p_4$ and $p_5$; 
 and about the note was $p_7$. {\it Treating this case
     with latent beliefs}: 
 Let us suppose the following structures 
 about the items: {\small $\text{Of the key} := (\{p_1\}, 
     \{p_1(p_2, p_3)\})$}, {\small $\text{Of the box} := 
 (\{p_4 \wedge p_5\}, \{{p_4 \wedge p_5}(p_8,p_6), 
     {p_4 \wedge p_5}(p_3, p_{10}), 
     {p_4 \wedge p_5}(p_9, p_2) 
     \})$}, and
     {\small $\text{Of the note} := (\{p_7\}, \{p_7(p_5,p_8)\})$}. 
     The player accepts these in this order, which 
     means that he/she consciously sees $p_1$ 
     about the key and $p_4$ and $p_5$ about the box. 
     About the note, he/she consciously sees 
     $p_7$ and $p_8$ (the latter is visible 
     because $p_5$ is visible). But $p_8$ 
     lets $p_6$ about the box to come to his/her consciousness. 
    Both $p_6$ and $p_8$ being visible, a sensible player 
    surmises $p_9$. And when it is added to his/her belief 
    set, he/she discovers
    $p_2$ about the box. But $p_2$ helps 
    him/her to conceive $p_3$ about the key, which then 
    reveals that he/she has had $p_{10}$ (about the box), as required. 
 {\it The second problem 
     (concerning dependent beliefs)}: 
 Suppose that a player
 conceives $p_1, p_2 \supset p_3$ (in Town B), 
 $p_4, p_5, p_6$ (in Town A), and $p_7, p_8, p_9$ (in Town C). 
 These should be enough to discover $p_2$ and to have $p_{10}$ 
 subsequently. 
 But suppose that he/she drops the key before getting to 
 Town C. Then his/her belief must be revised 
 with $\neg p_1$. The revision will drop $p_1$. However,
 under the AGM belief theory, the revision 
 does not drop $p_2 \supset p_3$ (Cf. Recovery in 
 Figure \ref{postulates}). 
  So in Town C, he/she has  
 $\neg p_1, p_2 \supset p_3, p_4, p_5, p_6, p_7, p_8$ and $p_9$.
 He/she presses the buttons according to $p_9$ and 
 gains $p_2$. Then $p_2$ and $p_2 \supset p_3$ 
 produces $p_3$. But because he/she has dropped 
 the key, we must ask here; what key? There is no key. 
 The problem is that `the key' and `the key hole' 
 in $p_3$ 
 are presupposing the existence of some key and some 
 key hole. It is generally difficult to recognise 
 the structure among propositions in propositional logic. 
 We could avoid this inconvenience by 
 replacing $p_3$ with $p_x$: If I have a key 
 and if I have a key hole, then I apply the key 
 to the key hole. But such replacement 
 would effectively disallow some natural beliefs 
 of the kind of $p_3$ 
 to appear as a belief. Also, in cases 
 where it is hard to think of $p_{\alpha}$ 
 (that is, not of whether it is true, 
 but of it)
 from one's existing knowledge/belief, 
 it is very unnatural to presume that 
 he/she would nonetheless
 come up with $p_{\alpha} \supset 
 p_{\beta}$. {\it 
     With latent beliefs}: 
 Assume the same belief structures about the items as before. 
 When the player drops the key and revises his/her 
 belief set with $\neg p_1$, by Observation 1 (Adequacy),
 the associated $p_1(p_2, p_3)$ is also dropped, which 
 precludes the stated problem in this  
 setting.  
 \hide{ 
 \indent For another example, 
 consider the following 
 propositions.  
 $\{p_1\}^\diamond$: {\it  
     Anticoagulant is prescribed. 
 }  and 
 $\{p_2\}^\diamond$: {\it  
     Emma is diagnosed deep vein thrombosis.} 
 Suppose the following propositions. 
\begin{multicols}{2}  
    {\small 
  \begin{enumerate}[leftmargin=0.4cm]
      \item $p_1$: I prescribe anticoagulant 
          (warfarin, unfractionated heparin, ...)
          to Emma.  
      \item $p_2$: Warfarin is a suitable choice 
          to Emma.  
      \item $p_3$: My patient is Emma.  
      \item $p_4$: Emma is in the first 
          trimester of pregnancy. 
      \item $p_5$: Emma is diagnosed 
          deep vein thrombosis. 
      \item $p_6$: Warfarin is contraindicated 
          in pregnancy.  
     \end{enumerate}    
 }
 \end{multicols}
 \noindent Suppose ${\{p_1\}^\diamond = 
     (\{p_1\}, \{p_1(p_6, \neg p_2))}$, 
     ${\{p_3\}^\diamond = 
         (\{p_2\}, \emptyset)}$, 
     and ${\{p_4\}^\diamond = 
         (\{p_4\}, \emptyset)}$.  
     Consider a doctor who believes 
     $\{p_1\}^\diamond$ and $\{p_2\}^\diamond$, 

 }
 {\ }\\\indent We have presented a new perception model 
 that incorporates latent beliefs. Expansion of 
 our work with suitable postulates 
 on iterated belief revision is one obvious direction. 
 Studies into reasonable resolution of the new 
 kind of inconsistencies should be also interesting.
\bibliographystyle{named} 
\bibliography{references}    
\end{document}